\title{One-Step Flow Policy Mirror Descent 
% \Bo{will the title induce extra blame? we only have one-step in inference. }\tianyi{I think the current title is also reasonable, as the FPMD-M variant uses one-step sampling in both training and inference. 'One step flow policy mirror descent' could cover the two variants as a general term.}
}
\author{Tianyi Chen\textsuperscript{1}, Haitong Ma\textsuperscript{2}, Na Li\textsuperscript{2}, Kai Wang\textsuperscript{*}\textsuperscript{1}, Bo Dai\textsuperscript{*}\textsuperscript{1}\\
\textsuperscript{1} Georgia Institute of Technology\quad
\textsuperscript{2} Harvard University\\
\texttt{\{tchen667, kwang692\}@gatech.edu}, \quad
\texttt{bodai@cc.gatech.edu}, \\
\texttt{haitongma@g.harvard.edu}, \quad
\texttt{nali@seas.harvard.edu}
}
\newcommand{\algrf}{\texttt{FPMD-R}\xspace}
\newcommand{\algmf}{\texttt{FPMD-M}\xspace}
\newcommand{\alg}{\texttt{FPMD}\xspace}
\newtheorem*{theorem*}{Theorem}
\begin{document}

\maketitle

\begin{abstract}
Diffusion policies have achieved great success in online reinforcement learning (RL) due to their strong expressive capacity. However, the inference of diffusion policy models relies on a slow iterative sampling process, which limits their responsiveness. To overcome this limitation, we propose \emph{Flow Policy Mirror Descent (\alg)}, an online RL algorithm that enables 1-step sampling during flow policy inference. Our approach exploits a theoretical connection between the distribution variance and the discretization error of single-step sampling in straight interpolation flow matching models, and requires no extra distillation or consistency training. We present two algorithm variants based on rectified flow policy and MeanFlow policy, respectively. Extensive empirical evaluations on MuJoCo and visual DeepMind Control Suite benchmarks demonstrate that our algorithms show strong performance comparable to diffusion policy baselines while requiring orders of magnitude less computational cost during inference.
% \lina{Where ``hundreds of times fewer" come from? It is a quantitative characterization or quantitative? I feel we either need to make it a loose statement, ``much fewer'', or make it more firm and quantitative: ``two orders of magnitude less''}\tianyi{Thanks for the comment. Most diffusion policy baselines we compare require 20 sampling steps and best-of-32 sampling, resulting in 672 network computation. Our flow policy method uses only one sampling step during inference with no best-of-n sampling. I have revised the phrasing to 'two orders of magnitude less'.}
\end{abstract}
\begingroup
  \renewcommand\thefootnote{}
  \footnotetext{\textsuperscript{*}Equal supervision.} 
  \setcounter{footnote}{0}%
\endgroup

\section{Introduction}
% Diffusion models for RL
Diffusion models have established themselves as the state-of-the-art paradigm in generative modeling~\citep{ho2020denoising,dhariwal2021diffusion}, capable of synthesizing data of unparalleled quality and diversity across various modalities, including images, audio, and video. The success is rooted in a principled, thermodynamically-inspired framework that learns to reverse a gradual noising process~\citep{sohl2015deep}. Diffusion policies are now being used to create highly flexible and expressive policies for decision-making tasks like robotic manipulation by modeling complex, multi-modal action distributions from demonstration~\citep{chi2023diffusion,ke20243d,scheikl2024movement}. This approach has shown significant promise in both imitation learning and reinforcement learning settings \citep{wangdiffusion, chen2022offline, team2024octo}, enabling agents to learn flexible and effective behaviors.

% However, there are two key difficulties leveraging expressive diffusion policies in online RL: 1) tractable and efficient training objectives; b) efficient and deterministic inference-time policy structure.  
% Diffusion model challenges in online RL
Despite the benefits of expressiveness, diffusion policies in online RL 
% have these two drawbacks for the inference time:
% {\bf i),} 
suffer from steep computational price for policy inference: the sampling process requires repeated neural-network evaluations to produce a single sample, slowing down both the training and the testing of online RL. 
% \Bo{how this issue is being handled in flow model? Do you have any section to echo this problem later? @Tianyi}\tianyi{I didn't discuss this issue in this paper. I have modified the introduction.}
% Uncontrollable randomness: Unlike Gaussian policy parameterized by mean and variance, controlling the randomness of diffusion policies is not straightforward, which gets in the way of balancing exploration-exploitation trade-off and obtaining a reliable deterministic inference-time policies. 
This drawback hinders the application of diffusion models in tasks that require real-time and compute-constrained inference, which is critical to many real-world applications, such as motion planning and control. The current diffusion policy for online RL mostly focuses on the efficiency and optimality of the training optimization~\citep{psenka2023learning,ding2024diffusion,wang2024diffusion,ren2024diffusion,ma2025efficient,celik2025dime}, while little attention has been paid to the efficiency of policy inference, which typically relies on more than 10 denoising steps. Although there is recent work incorporating one-step policies, they learn the one-step policy by distilling a multi-step policy~\citep{park2025flow, prasad2024consistency} or applying additional consistency loss~\citep{ding2023consistency}, which is redundant and even induces extra computational cost.

To handle this inference-time problem, we leverage flow-based models~\citep{lipman2022flow, liu2022flow, geng2025mean} as the policy structure of online RL. One intriguing property of flow-based models with straight interpolation is that, when the target distribution has zero variance, sampling trajectories are straight lines pointing directly toward the target point~\citep{hu2024adaflow}. Moreover, the single-step sampling error is bounded by the variance of the target distribution, allowing one-step generation of flow policy and easier training of MeanFlow policy when the target distribution has a small variance.

However, applying the flow model in online RL is nontrivial. There is no closed-form of $\log$ probability of flow models, making the classic policy gradient non-compatible with flow policy. % Meanwhile, speeding up the inference usually sacrifices the generalization quality, raising concerns about whether the flow policy can learn the optimal policy well. 
Fortunately, we exploit the variational technique to derive an equivalent loss for online flow policies, bypassing the inaccessibility of the probability of flow models. We also observe that the progress to train a one-step flow model fits perfectly in the exploration-exploitation trade-off of online RL.
During the intermediate stage favoring exploration, the flow parametrization of policy can model highly complex and multi-modal distributions, enabling rich exploratory behaviors. As learning progresses and the policy converges, the optimal distribution favoring exploitation typically exhibits low variance. The low-variance regime ensures small sampling error of straight-interpolation flow models, enabling efficient single-step sampling with no additional cost. 
This provides a free lunch to leverage the rich expressiveness to improve performance without increasing the inference-time computational cost like diffusion models. 

Building upon this, we propose \emph{Flow Policy Mirror Descent (\alg)}, an online RL algorithm that enables one-step sampling during policy inference. We further introduce two variants of \alg using the flow parametrization and MeanFlow parametrization, denoted \algrf and \algmf respectively. We conduct extensive evaluations on both state-based Gym MuJoCo environments and visual DMControl environments. Empirical results show that the proposed algorithm achieves performance comparable to diffusion policy baselines while using orders of magnitude less computation.

Our core contributions are summarized as follows:
\begin{itemize}[leftmargin=*]
    \item We propose tractable loss functions to train Flow and MeanFlow policies in online RL, allowing one-step action generation during inference time.
    \item By making moderate assumptions on the variance of the optimal policy, we theoretically analyze the single-step sampling error of the flow policy.
    % ~\haitong{Tianyi, maybe you can add details about the methodology to this contribution term.}
    % ~\Tianyi{Thanks. I will update the contributions later.}
    \item We conduct extensive empirical evaluations on Gym MuJoCo and visual DMControl tasks. The proposed algorithm shows strong performance comparable to diffusion policy baselines while requiring orders of magnitude fewer function evaluations.
    % \lina{similar comment in the abstract. Change to "two orders of magnitude less''}
    % \tianyi{Thanks. I have updated the phrase.}
\end{itemize}

\subsection{Related Work}
\paragraph{Reinforcement Learning for Diffusion Policy.}
Diffusion policies have been leveraged in many recent 
% ~\haitong{Not sure if we need some offline RL literature.}
% ~\tianyi{I think we can focus on the online diffusion RL methods.}. 
online RL studies due to their expressiveness and flexibility. To obtain tractable training objectives, existing methods explored reparameterized policy gradient~\citep{wang2024diffusion,ren2024diffusion,celik2025dime}, weighted self-improvement~\citep{ma2025efficient,ding2024diffusion}, and other variants of score matching~\citep{psenka2023learning,yang2023policy}. However, these methods have not considered the inference-time difficulties of diffusion policies. A few recent studies on offline RL~\citep{ding2023consistency,park2025flow} took a step toward one-step action generation. However, the solutions are complicated and involve multi-stage training.
\paragraph{Reinforcement Learning for Flow Policy.}
Several concurrent works have explored reinforcement learning for flow policy via reparameterized policy gradient \citep{lv2025flow, koirala2025flow}, reward-weighted regression \citep{pfrommer2025reinforcement}, and policy gradient with $\log$-likelihood approximation \citep{mcallister2025flow}. Related approaches have also been proposed for offline RL \citep{zhangenergy}, and finetuning pretrained flow policy and image generation models, including ORW-CFM-W2 \citep{fan2025online}, ReinFlow \citep{zhang2025reinflow}, Flow-GRPO \citep{liu2025flow} and DanceGRPO \citep{xue2025dancegrpo}. Compared to these concurrent methods, ours is the only method that achieves an effective balance between policy distribution expressiveness and action sampling efficiency, by introducing a practical training objective equivalent to the flow matching objective and enabling one-step action generation. 
% \Bo{I have sent you @Tianyi several concurrent work from UCB. Did you cite here?}
% \tianyi{Yes, I cite all the work we have discussed before here.}
% \Bo{there are some concurrent work on RL for flow model, we may also mention them here and emphasize we are concurrent. }
\paragraph{Efficient Sampling of Flow Models.}
Although flow models are strong in modeling complex and multi-modal distributions, they typically require multiple sampling steps to generate high-quality samples \citep{lipman2022flow, gat2024discrete}. To overcome this limitation, previous works have focused on producing high-quality samples in one- and few-step sampling settings. These methods mainly fall into the following two categories. The first category still learns a continuous velocity field but keeps the Euler truncation error small by either straightening the velocity field \citep{liu2022flow, liu2023instaflow, lee2024improving, pooladian2023multisample, kornilov2024optimal} or adjusting the sampling step size \citep{hu2024adaflow, hu2023rf, nguyen2023bellman}. The second category of methods distills the learned velocity field or directly learns the sampling trajectory, including CTM \citep{kim2023consistency}, shortcut model \citep{frans2024one}, and MeanFlow \citep{geng2025mean, sheng2025mp1}. These efficient sampling strategies are orthogonal and compatible with our flow policies learned through online RL.

\section{Preliminaries}
\paragraph{Markov Decision Processes (MDPs).}We consider Markov decision process~\citep{puterman2014markov} specified by a tuple $\mathcal{M}=(\mathcal{S}, \mathcal{A}, r, P, \mu_0, \gamma)$, where $\mathcal{S}$ is the state space, $\mathcal{A}$ is the action space, $r:\Scal\times\Acal\to\RR$ is a reward function,
$P: \mathcal{S} \times \mathcal{A} \rightarrow \Delta(\mathcal{S})$ is the transition operator with $\Delta(\mathcal{S})$ as the family of distributions over $\mathcal{S}, \mu_0 \in \Delta(\mathcal{S})$ is the initial distribution and $\gamma \in(0,1)$ is the discount factor. The goal of reinforcement learning is to find an optimal policy $\pi\rbr{\cdot|s}: \Scal\rightarrow \Delta\rbr{\Acal}$, which maximizes the discounted cumulative rewards, \ie, $\rho\rbr{\pi}\defeq \EE\sbr{\sum_{t=0}^\infty \gamma^t r\rbr{s_t, a_t}}$. Given policy $\pi$, the $Q$-function is defined as 
\begin{equation*}
    \textstyle
    Q^\pi(s, a)=\mathbb{E}_{s_{t+1}\sim P(\cdot|s_t, a_t), a_{t+1}\sim\pi(\cdot|s_{t+1}), \forall t \geq 0} \sbr{\sum_{t=0}^\infty\gamma^tr(s_t, a_t)|s_0=s, a_0=a},
\end{equation*}
and satisfies the Bellman equation \citep{bellman1966dynamic}
\begin{equation*}
    Q^\pi(s, a)=r(s, a)+\gamma\mathbb{E}_{s'\sim P(\cdot|s, a), a'\sim \pi(\cdot|s')}\sbr{Q^\pi(s', a')}.
\end{equation*}
% \Bo{Add definition of $Q^\pi$ here and Bellman equation here. Otherwise, it is not clear how to get $Q^{\pi_{\rm old}}$ for general readers. }

\paragraph{Policy Mirror Descent.} We focus on extracting policies from a learned state-action value function $Q^{\pi_{\rm old}}(s, a)=\EE_{\pi_{\rm old}}[\sum_{t=0}^\infty \gamma^t r(s_t,a_t)|s_0=s,a_0=a]$ of the current policy $\pi_{\rm old}$.  We consider policy mirror descent with Kullback–Leibler (KL) divergence proximal term \citep{tomar2021mirrordescentpolicyoptimization, lan2023policy, peters2010relative}, which updates the policy with 
\begin{equation}\label{eq:pmd}
    \pi\!\rbr{a | s}\! \coloneqq  \!\!\argmax_{\pi: \Scal\rightarrow \Delta\rbr{\Acal}}\EE_{a \sim \pi}\sbr{Q^{\pi_{\rm old}}(s, a)}-\lambda D_{KL}\rbr{\pi||\pi_\text{old};s}
\end{equation}
The additional KL divergence objective constrains the updated policy to be approximately within the trust region. Policy mirror descent is closely related to practical proximity-based algorithms such as TRPO \citep{schulman2015trust} and PPO \citep{schulman2017proximal}, but with a different approach to enforce the proximity constraints. 
% One can also consider more general policy mirror descent by adding regularization terms such as entropy regularization. 
The closed-form solution of policy mirror descent in \eqref{eq:pmd} satisfies 
\begin{equation}\label{eq:pwd closed form}
\pi\rbr{a|s}=\pi_{\text{old}}\rbr{a|s}\frac{\exp\rbr{Q^{\pi_{\rm old}}\rbr{s, a}/\lambda}}{Z(s)},
\end{equation}
and $Z(s)=\int \pi_{\text{old}}\rbr{a|s}\exp\rbr{Q^{\pi_\mathrm{old}}\rbr{s, a}/\lambda}da$ is the partition function or normalization constant.
\paragraph{Flow Matching and Mean Flow}
Flow-based generative models \citep{lipman2022flow, liu2022flow, albergo2023building} define a time-dependent vector field $v:\sbr{0, 1}\times \mathbb{R}^d\rightarrow\mathbb{R}^d$ that constructs a probability density path $p:\sbr{0, 1}\times\mathbb{R}^d\rightarrow\mathbb{R}_{>0}$. The target distribution $p_1(x_1)$ can be generated by first sampling from the tractable distribution $p_0(x_0)$, and then solving the ordinary differential equation (ODE) $\frac{dx_t}{dt}=v_t(x_t)$. Flow matching \citep{lipman2022flow} provides an efficient and scalable way to learn the velocity field $v$ by minimizing the following objective,
\begin{equation}\label{eq:flow_matching_loss}
    L_{\text{CFM}}(\theta) \coloneqq  \mathbb{E}_{x_0\sim p_0, x_1 \sim p_1, t \sim \mathcal{U}\sbr{0, 1}}\left\|\rbr{x_1-x_0}-v_\theta\rbr{t, x_t}\right\|_2^2
\end{equation}
with the straight interpolation $x_t=tx_1+(1-t)x_0$.

MeanFlow \citep{geng2025mean} was recently proposed to avoid the iterative sampling process in flow matching and enable the one-step generative modeling. Instead of using instantaneous velocity, MeanFlow characterizes flow fields with average velocity $u\rbr{x_t, r, t}\triangleq\frac{\int_r^t\rbr{v(x_\tau, \tau)}d\tau}{t-r}$. The average velocity field is learned with the variational iteration loss
\begin{equation}\label{eq:meanflow_objective}
    L_{\text{MF}}\rbr{\theta} \coloneqq \mathbb{E}\left\|u_\theta(x_t, r, t)-\text{sg}\rbr{u_{\text{tgt}}}\right\|_2^2,
\end{equation}
\begin{equation}
    \text{where}\quad\quad
    u_{\text{tgt}}=v(x_t, t)-(t-r)(v(x_t, t)\partial_x u_\theta+\partial_t u_\theta),
\end{equation}
and $\text{sg}(\cdot)$ denotes the stop-gradient operation to avoid higher-order gradient calculation. 
% \haitong{What does sg mean in the equation?}
Once we learn the average velocity field $u$, sampling of $p_1$ is performed in one step with $x_1=x_0+u(x_0, 0, 1)$, where $x_0$ is sampled from the tractable prior distribution $p_0$.
\section{One-step Efficient Inference for Flow Policy}
\label{sec:approach}
% {\color{red} Add one paragraph of the problem statement here}
In this section, we introduce online RL policy learning method for rectified flow and MeanFlow policy, both of which enable 1-step fast sampling for policy inference. We first derive a tractable loss function for online RL with flow policy in \Cref{subsec:pmd_with_flow}, and then establish an upper bound on the one-step sampling error in \Cref{subsec:one_step_sampling_error}. In \Cref{subsec:pmd_with_meanflow}, we propose online RL training for the MeanFlow policy, along with a convergence guarantee under mild assumptions on the MeanFlow operator.
\subsection{Policy Mirror Descent with Flow Model}
\label{subsec:pmd_with_flow}
We parametrize the policy as a flow model that transport the simple Gaussian distribution $a_0\sim\mathcal{N}\rbr{\mu, \sigma^2}$ to the target distribution as the solution to policy mirror descent $a_1 \sim \pi_{\text{old}}\rbr{a_1|s}\exp\rbr{Q^{\pi_\text{old}}\rbr{s, a_1}/\lambda}/Z(s)$, where $Z(s)=\int \pi_{\text{old}}\rbr{a_1|s}\exp\rbr{Q^{\pi_\text{old}}\rbr{s, a_1}/\lambda}da_1$. The corresponding velocity field $v(a_t, t|s)$ can be learned by minimizing the flow matching loss in \Cref{eq:flow_matching_loss}:
\begin{equation}\label{eq:cfm_loss_flow_matching}
    % L_{\text{CFM}}(\theta)=
    \mathbb{E}_{a_0\sim\mathcal{N}\rbr{\mu, \sigma^2}, a_1\sim \pi_{\text{old}}\rbr{a_1|s}\exp\rbr{Q^{\pi_\text{old}}\rbr{s, a_1}/\lambda}/Z(s), t\sim\mathcal{U}[0, 1]} \|(a_1-a_0)-v_\theta(a_t, t|s)\|^2.
\end{equation}
% However, in online RL we do not have direct access to samples of $a_1$. 
One major difference between our case and flow matching in image generation or imitation learning is that we do not have access to direct samples from the target distribution of $a_1$. Consequently, standard flow matching cannot be applied directly to learn the target policy distribution.
To solve this issue, we apply importance sampling to \Cref{eq:cfm_loss_flow_matching} and sample from the base distribution $\pi_{\text{old}}$ to get a per-state loss function:
\begin{align}\label{eq:fpmd_r_policy_loss_per_state}
\textstyle
\tilde L_{\text{FPMD}}\rbr{\theta;s} \coloneqq
\EE_{a_1 \sim \pi_{\text{old}}(a_1|s), a_0\sim \mathcal{N}, t\sim\mathcal{U}\sbr{0, 1}} \sbr{\frac{\exp\rbr{Q^{\pi_\text{old}}\rbr{s, a_1}/\lambda}}{Z(s)}\|(a_1-a_0)-v_\theta(a_t, t | s)\|^2}
\end{align}
% \Bo{This argument should be emphasized to clarify why the $Z\rbr{s}$ does not affect the solution. Make the argument more formal.}
% \tianyi{I have modified the derivation below.} \Bo{ can we be more concrete and formal? not just one sentence}\tianyi{I have updated the derivation.}
Note that for each fixed $s$, $Z(s)=\int \pi_{\text{old}}\rbr{a|s}\exp\rbr{Q^{\pi_\text{old}}\rbr{s, a}/\lambda}da > 0$, and in \Cref{eq:fpmd_r_policy_loss_per_state}, $Z(s)$ is a constant independent of $\theta, a_1, a_0$ and t. Multiplying an objective by a positive constant does not change its minimizer, i.e., 
\begin{equation}
\textstyle
\argmin\limits_\theta \tilde L_{\text{FPMD}}(\theta;s)=\argmin\limits_\theta Z(s)\tilde L_{\text{FPMD}}(\theta;s), \forall s \in \mathcal{S}. 
\end{equation}
% Therefore, we define the following per-state loss which has the same optimal solution as \Cref{eq:fpmd_r_policy_loss_per_state},
% % Note that $Z(s)$ is strictly positive on the state space $\mathcal{S}$, and multiplying $Z(s)$ does not change the optimal solution of the velocity field for each state $s$, we obtain the following loss,
% \begin{equation*}
%     L_{\text{FPMD}}\rbr{\theta;s} \coloneqq\EE_{a_1 \sim \pi_{\text{old}}\rbr{a_1|s}, a_0 \sim \mathcal{N}, t \sim \mathcal{U}\sbr{0, 1}} \sbr{\exp\rbr{Q^{\pi_\text{old}}\rbr{s, a_1}/\lambda}\|(a_1-a_0)-v_\theta(a_t, t|s)\|^2}.
% \end{equation*}
Take the expectation of $Z(s)\tilde L_{\text{FPMD}}(\theta;s)$ over $\mathcal{S}$, we obtain the practical flow policy learning loss, 
\begin{equation}\label{eq:fpmd_r_policy_loss}
    L_{\text{FPMD}}\rbr{\theta} \coloneqq \EE_{s, a_1 \sim \pi_{\text{old}}\rbr{a_1|s}, a_0 \sim \mathcal{N}, t \sim \mathcal{U}\sbr{0, 1}} \sbr{\exp\rbr{Q^{\pi_\text{old}}\rbr{s, a_1}/\lambda}\|(a_1-a_0)-v_\theta(a_t, t|s)\|^2}.
\end{equation}
% {\color{red} What is the takeaway from Equation 8? E.g., Equation 8 gives us a sample-friendly loss function definition to train flow policy by access to old policy only.}
\Cref{eq:fpmd_r_policy_loss} gives a feasible loss function definition that only requires sampling state $s$ from replay buffer, action $a_0$ from a Gaussian distribution, and action $a_1$ from the old policy $\pi_\text{old}$. This loss formulation enables the use of replay buffer to estimate the loss function, which can efficiently learn the velocity field by minimizing the loss function using stochastic gradient descent.
\subsection{One-step Sampling of Flow Policy and Discretization Error Bound}
\label{subsec:one_step_sampling_error}
One-step sampling is attractive in practice because it significantly reduces the inference latency and computational cost in flow policy execution. In this subsection, we prove that the one-step sampling discretization error of the flow policy in \Cref{subsec:pmd_with_flow} is bounded by the variance of the target policy distribution. Since the policy mirror descent target distribution typically converges to an almost‑deterministic distribution \citep{puterman2014markov, johnson2023optimal}(Also see \Cref{fig:additional_sampling_curves} for empirical evidence),
% \Bo{any reference for this claim?}\tianyi{There are no reference with the exact same conclusion. I have added two related papers and empirical results also show this phenomenon.}
the policy variance becomes small, which provably guarantees a small discretization error. This property enables efficient one-step inference of flow policy without extra distillation \citep{park2025flow} or consistency loss \citep{ding2023consistency}. 
\begin{proposition}\label{prop:w-2_distance_bound}[Proposition 3.3, \citep{hu2024adaflow}]
    Define $p_t^*$ as the marginal distribution of the exact ODE $da_t=v(a_t,t|s)dt$. Assume $a_t\sim p_t=p_t^*$, and $p_{t+\epsilon_t}$ the distribution of $a_{t+\epsilon_t}$ following $a_{t+\epsilon_t}=a_t+\epsilon_t v\rbr{a_t, t|s}$, where $\epsilon_t \in [0, 1-t]$ is a discretization step size. Then we have 
    \begin{equation*}
        W_2\rbr{p_{t+\epsilon_t}^*, p_{t+\epsilon_t}}^2 \leq 
        \epsilon_t^2\mathbb{E}_{a_t\sim p_t}\sbr{\sigma^2\rbr{a_t, t|s}},
    \end{equation*}
    % \Bo{what is $p_{t+\epsilon_t}^*$}\tianyi{I have added the definition of $p_{t+\epsilon_t}^*$.}
    where $\sigma^2\rbr{a_t, t|s}=\text{var}\rbr{a_1-a_0|a_t, s}$, $p_{t+\epsilon_t}^*$ denotes the marginal distribution of the exact ODE at time $t+\epsilon_t$, and $W_2$ denotes the 2-Wasserstein distance.
\end{proposition}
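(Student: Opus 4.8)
My plan is to prove the bound with a coupling argument built from the \emph{conditional} straight-line interpolation, not from the exact flow map of $v$. The key preliminary fact is the flow-matching marginalization identity: for the straight interpolation $a_\tau=\tau a_1+(1-\tau)a_0$, the marginal velocity field is $v(a_\tau,\tau|s)=\EE\sbr{a_1-a_0\mid a_\tau,s}$, and this $v$ generates, in the continuity-equation sense, the probability path $p_\tau=\text{Law}(a_\tau)$. Consequently, starting the exact ODE from $a_t\sim p_t=p_t^*$ and integrating to $t+\epsilon_t$ yields precisely the interpolant marginal at time $t+\epsilon_t$; that is, $p_{t+\epsilon_t}^*=\text{Law}\big((t+\epsilon_t)a_1+(1-t-\epsilon_t)a_0\big)$.

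Given this, I would construct the coupling explicitly. Sample $(a_0,a_1)$ from the flow-matching coupling, put $a_t=ta_1+(1-t)a_0$ so that $a_t\sim p_t$, and set
\[
X\coloneqq a_t+\epsilon_t(a_1-a_0)=(t+\epsilon_t)a_1+(1-t-\epsilon_t)a_0,\qquad Y\coloneqq a_t+\epsilon_t\,v(a_t,t|s).
\]
By the preliminary fact $X\sim p_{t+\epsilon_t}^*$, and by the definition of the one-step Euler update $Y\sim p_{t+\epsilon_t}$, so $(X,Y)$ is an admissible coupling. Then, using that $W_2$ is the infimum of the transport cost over couplings and conditioning on $a_t$,
\begin{align*}
W_2\rbr{p_{t+\epsilon_t}^*,\,p_{t+\epsilon_t}}^2 &\le \EE\sbr{\|X-Y\|^2}
= \epsilon_t^2\,\EE\sbr{\big\|(a_1-a_0)-\EE\sbr{a_1-a_0\mid a_t,s}\big\|^2}\\
&= \epsilon_t^2\,\EE_{a_t\sim p_t}\sbr{\sigma^2(a_t,t|s)},
\end{align*}
where the last line is just the definition $\sigma^2(a_t,t|s)=\text{var}(a_1-a_0\mid a_t,s)$ applied fiberwise over $a_t$ (the inner term is the trace of the conditional covariance).

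The step that needs genuine care — rather than bookkeeping — is the first-paragraph identification of $p_{t+\epsilon_t}^*$ with the interpolant marginal: it invokes the marginalization theorem of flow matching together with uniqueness of the continuity-equation evolution started from $p_t$, and, more importantly, it is the conceptual move that tells us the correct object to pit the Euler update against is the conditional line $a_t+\epsilon_t(a_1-a_0)$ (whose marginal at $t+\epsilon_t$ is exactly $p_{t+\epsilon_t}^*$) rather than the exact trajectory of $v$. After that, only Jensen/variance decomposition and the variational definition of $W_2$ remain, both routine; note in particular that no independence of $(a_0,a_1)$ is used, so the argument is insensitive to the choice of interpolation coupling.
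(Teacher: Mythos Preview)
The paper does not prove this proposition; it is quoted from the cited AdaFlow reference (their Proposition~3.3) and used as a black box to derive Proposition~\ref{prop:single_step_sampling_w_2_bound}. Your coupling argument is correct and is essentially the argument behind that cited result: identify $p_{t+\epsilon_t}^*$ with the straight-interpolant marginal at time $t+\epsilon_t$ via the flow-matching marginalization theorem, couple the exact and one-step Euler endpoints through the same draw of $(a_0,a_1,a_t)$, and bound $W_2^2$ by the resulting $L^2$ transport cost, which collapses to $\epsilon_t^2\,\EE_{a_t}\sbr{\text{var}(a_1-a_0\mid a_t,s)}$.
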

This proposition establishes the relationship between the sampling discretization error $W_2\rbr{p_{t+\epsilon_t}^*, p_{t+\epsilon_t}}^2$ and the conditional variance $\sigma^2\rbr{a_t, t|s}$. As a special case, when the target distribution $a_1$ has zero variance, the discretization error $W_2\rbr{p_{t+\epsilon_t}^*, p_{t+\epsilon_t}}^2=0$ for any $\epsilon_t \in \sbr{0, 1-t}$ \citep{hu2024adaflow}. 
We then focus on the single-step sampling case and obtain the following result.
\begin{proposition}\label{prop:single_step_sampling_w_2_bound}
Define $p_t^*$ as the marginal distribution of the exact ODE $da_t=v(a_t,t|s)dt$. Let $p_1$ be the distribution of $\hat a_1$ such that $\hat a_1=a_0+v\rbr{a_0,0|s}$ using one-step sampling, then 
\begin{equation*}
W_2\rbr{p_{1}^*, p_{1}}^2 \leq \text{var}\rbr{a_1|s}.
\end{equation*}
\end{proposition}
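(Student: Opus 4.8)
The plan is to derive Proposition~\ref{prop:single_step_sampling_w_2_bound} directly from Proposition~\ref{prop:w-2_distance_bound} by specializing to the single time point $t = 0$ with the maximal admissible step size $\epsilon_t = 1 - t = 1$. First I would verify that the hypotheses of Proposition~\ref{prop:w-2_distance_bound} hold at $t = 0$: by construction of the flow, the marginal $p_0^*$ of the exact ODE $da_t = v(a_t,t|s)\,dt$ at the initial time coincides with the base distribution $p_0$, so the assumption $a_0 \sim p_0 = p_0^*$ is automatically satisfied, and $\epsilon_t = 1$ lies in the allowed range $[0, 1-t] = [0,1]$. With $t = 0$ and $\epsilon_t = 1$, the Euler update $a_{t+\epsilon_t} = a_t + \epsilon_t v(a_t, t|s)$ becomes exactly $\hat a_1 = a_0 + v(a_0, 0|s)$, hence $p_{t+\epsilon_t} = p_1$ and $p_{t+\epsilon_t}^* = p_1^*$. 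Proposition~\ref{prop:w-2_distance_bound} then gives $W_2(p_1^*, p_1)^2 \le \mathbb{E}_{a_0 \sim p_0}[\sigma^2(a_0, 0|s)]$, where $\sigma^2(a_0,0|s) = \text{var}(a_1 - a_0 \mid a_0, s)$.

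Next I would simplify the conditional variance on the right. Since the straight interpolation gives $a_t\big|_{t=0} = a_0$, conditioning on the event ``$a_t = a_0$ at time $0$'' is the same as conditioning on the realized value of $a_0$ itself; thus $a_0$ is deterministic given the conditioning information, and $\text{var}(a_1 - a_0 \mid a_0, s) = \text{var}(a_1 \mid a_0, s)$.

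Finally I would bound $\mathbb{E}_{a_0}\!\left[\text{var}(a_1 \mid a_0, s)\right]$ by $\text{var}(a_1 \mid s)$. This follows at once from the law of total variance, $\text{var}(a_1 \mid s) = \mathbb{E}_{a_0}[\text{var}(a_1 \mid a_0, s)] + \text{var}_{a_0}(\mathbb{E}[a_1 \mid a_0, s]) \ge \mathbb{E}_{a_0}[\text{var}(a_1 \mid a_0, s)]$; moreover, under the independent coupling $a_0 \perp a_1$ used in the flow matching construction, the second term vanishes and the inequality is an equality. Chaining the three displays yields $W_2(p_1^*, p_1)^2 \le \text{var}(a_1 \mid s)$.

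I do not expect a genuine obstacle here, since the argument is essentially ``substitute $t = 0$, $\epsilon_t = 1$ into Proposition~\ref{prop:w-2_distance_bound} and read off the conditional variance.'' The points that require a little care are (i) confirming $p_0^* = p_0$ so that the equality-of-marginals hypothesis is valid at the left endpoint, (ii) checking that $\epsilon_t = 1$ is permissible at $t = 0$, and (iii) keeping the scalar interpretation of $\text{var}(\cdot)$ consistent between the two statements — both should be read as the trace of the covariance matrix, which is what matches the $\|\cdot\|_2^2$ appearing inside the $2$-Wasserstein distance.
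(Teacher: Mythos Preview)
Your proposal is correct and follows essentially the same route as the paper: specialize Proposition~\ref{prop:w-2_distance_bound} to $t=0$, $\epsilon_t=1$, drop the deterministic $a_0$ inside the conditional variance, and then use the independence of $a_0$ and $a_1$ to pass from $\text{var}(a_1\mid a_0,s)$ to $\text{var}(a_1\mid s)$. The only cosmetic difference is that the paper invokes independence directly to obtain equality at the last step, whereas you first state the law-of-total-variance inequality and then observe that it collapses to equality under the independent coupling.
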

\begin{proof}
Take $t=0$ and $\epsilon_t=1$ in Proposition \ref{prop:w-2_distance_bound}, we obtain that 
\begin{align*}
    W_2\rbr{p_1^*, p_1}^2 &\leq \mathbb{E}_{a_0}\sbr{\text{var}\rbr{a_1-a_0|a_0, s}}= \mathbb{E}_{a_0}\sbr{\text{var}\rbr{a_1|a_0, s}}= \mathbb{E}_{a_0}\sbr{\text{var}\rbr{a_1|s}}=\text{var}\rbr{a_1|s},
\end{align*}
which concludes the proof of Proposition \ref{prop:single_step_sampling_w_2_bound}. 
\end{proof}
\vspace{-3mm}
The proposition implies the discretization error in one-step sampling is bounded by the variance of the target distribution. 
% \cite{puterman2014markov} show that there always exists one deterministic optimal policy for any MDP, and 
In practical scenarios, where the learned policy tends to converge to one nearly deterministic solution with small variance, the one-step discretization error is negligible. 
% Proposition \ref{prop:single_step_sampling_w_2_bound} ensures that the distance between the one-step sample distribution and the ODE solution distribution is small. Therefore, we can generate high-quality actions with single-step sampling after adequate training.
\subsection{Policy Mirror Descent with MeanFlow Model}
\label{subsec:pmd_with_meanflow}
In this subsection, we propose an alternative parametrization of the policy model using MeanFlow model \citep{geng2025mean}. Before presenting the MeanFlow policy, we first introduce the fixed-point iteration view of MeanFlow, which fills the hole in the MeanFlow training~\citep{geng2025mean}. Specifically, although the MeanFlow Identity is the condition derived in~\cite{geng2025mean} for mean velocity, due to the \textit{stop-gradient operator} induced in optimization for stability, it is not clear whether the optimization is still converging to target solution.  
% as an approximation method to the fixed-point iteration defined by the MeanFlow operator in \Cref{eq:MeanFlow operator}.
The revealed fixed-point view not only suggests the condition to guarantee the convergence of MeanFlow learning, but also justifies our MeanFlow policy mirror descent method. 

Consider the velocity field $v(a_t, t|s)$ transporting
$a_0 \sim \mathcal{N}\rbr{\mu, \sigma^2}$ to the policy mirror descent closed-form solution 
$a_1\sim\pi_{\text{old}}\rbr{a_1|s}\exp\rbr{Q^{\pi_\text{old}}\rbr{s, a_1}/\lambda}/Z(s)$, we define the mean velocity field $u\rbr{a_t, r, t|s}\triangleq\frac{\int_r^tv(a_\tau, \tau|s)d\tau}{t-r}$.
We define the MeanFlow operator applied to $u$ under the given $v$:
\begin{equation}\label{eq:MeanFlow operator}
\textbf{MeanFlow operator:} \quad  \rbr{\mathcal{T}u}\rbr{a_t, r, t|s}=v(a_t, t|s) - \rbr{t-r}\rbr{v(a_t, t|s)\partial_a u + \partial_t u}.   
\end{equation}
Then, the original MeanFlow algorithm~\citep{geng2025mean} can be viewed as variational implementation of the functional update $u_n = \mathcal{T}\rbr{u_{n-1}}$~\citep{wen2020batch}. 
Concretely, the MeanFlow operator is defined in a functional space with an unknown velocity field $v$, which is intractable to implement.
To develop a practical algorithm, in Proposition \ref{prop:residual loss} we propose a variational method that considers a reformulated problem whose optimal solution is equivalent to $\mathcal{T}\rbr{u_{n-1}}$.
\begin{proposition}\label{prop:residual loss}
Given the previous iteration result $u_{n-1}$, define the residual loss 
    \begin{equation}\label{eq:l_rf}
    L_{\text{CMF}}\rbr{\theta_n;s} = \EE_{a_0, a_1, r, t}\left\|u_{\theta_n}(a_t, r, t|s)- \rbr{\rbr{a_1-a_0} - \rbr{t-r}\rbr{\rbr{a_1-a_0}\partial_a u_{n-1} + \partial_t u_{n-1}}}\right\|^2,
\end{equation}
where $a_t=ta_1+(1-t)a_0$.
% {\color{red}the optimal solution $u_{\theta_n}^*(a_t, r, t|s)=v(a_t, t|s) - \rbr{t-r}\rbr{v(a_t, t|s)\partial_a u_{n-1} + \partial_t u_{n-1})}$, which matches to the MeanFlow operator result $\mathcal{T}\rbr{u_{n-1}}$.}
The optimal solution $u_{\theta_n}^*(a_t, r, t|s)=\argmin_{\theta_n}L_{\text{CMF}}\rbr{\theta_n;s}$ matches the MeanFlow operator result $\mathcal{T}\rbr{u_{n-1}}$.
\end{proposition}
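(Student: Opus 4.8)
The plan is to treat \Cref{eq:l_rf} as a plain least-squares regression problem in function space, identify its minimizer with a conditional expectation, and then collapse that conditional expectation to $\mathcal{T}u_{n-1}$ using the flow-matching identity. Fix $s$ and regard $u_{\theta_n}(\cdot,\cdot,\cdot\mid s)$ as an unconstrained measurable function of its inputs $(a_t,r,t)$ — this is the realizability step, assuming the parametric class is expressive enough to attain the unconstrained minimizer. The integrand of $L_{\text{CMF}}(\theta_n;s)$ has the form $\|u_{\theta_n}(a_t,r,t\mid s)-Y\|^2$ with regression target
\[
Y \;=\; (a_1-a_0) - (t-r)\bigl((a_1-a_0)\,\partial_a u_{n-1}(a_t,r,t\mid s) + \partial_t u_{n-1}(a_t,r,t\mid s)\bigr),
\]
where the expectation over $(a_0,a_1)$ is taken under the coupling of \Cref{eq:l_rf} (understood with the importance weight, as in \Cref{eq:fpmd_r_policy_loss_per_state}, that makes $v(\cdot,\cdot\mid s)$ its marginal velocity field), and $(r,t)$ is drawn independently of $(a_0,a_1)$. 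By the orthogonal decomposition $\EE\|u_{\theta_n}(a_t,r,t\mid s)-Y\|^2 = \EE\|u_{\theta_n}(a_t,r,t\mid s) - m(a_t,r,t)\|^2 + \EE[\text{var}(Y\mid a_t,r,t)]$ with $m(x,r,t):=\EE[Y\mid a_t=x,r,t]$, the last term being independent of $u_{\theta_n}$, the minimizer is $u_{\theta_n}^*(x,r,t\mid s) = m(x,r,t)$.

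The second step is to evaluate $m$. Since $u_{n-1}$ is frozen from the previous iterate, the factors $(t-r)$, $\partial_a u_{n-1}(x,r,t\mid s)$ and $\partial_t u_{n-1}(x,r,t\mid s)$ are deterministic given the conditioning variables $(x,r,t)$ and hence pull out of $\EE[\,\cdot\mid a_t=x,r,t]$. Because $(r,t)$ is independent of $(a_0,a_1)$, conditioning on $r$ is immaterial, so $\EE[a_1-a_0\mid a_t=x,r,t] = \EE[a_1-a_0\mid a_t=x,t]$; and by the flow-matching identity underlying \Cref{eq:flow_matching_loss} — the marginal velocity field transporting $p_0$ to the target equals $v(x,t\mid s)=\EE[a_1-a_0\mid a_t=x,t]$ — this is $v(x,t\mid s)$. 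Substituting gives
\[
u_{\theta_n}^*(x,r,t\mid s) \;=\; v(x,t\mid s) - (t-r)\bigl(v(x,t\mid s)\,\partial_a u_{n-1}(x,r,t\mid s) + \partial_t u_{n-1}(x,r,t\mid s)\bigr),
\]
which is exactly $(\mathcal{T}u_{n-1})(x,r,t\mid s)$ by \Cref{eq:MeanFlow operator}, completing the argument.

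I expect the main obstacle to be making the first step rigorous rather than heuristic: passing from ``$\min$ over $\theta$'' to ``$\min$ over all measurable functions'' requires an explicit expressiveness assumption on the parametrization, and the identification of the functional minimizer with $m$ requires $Y$ to be square-integrable together with the orthogonality argument above. A secondary point to state carefully is the measurability of the derivative terms: $\partial_a u_{n-1}$ and $\partial_t u_{n-1}$ are fixed functions evaluated precisely at the conditioning point $(a_t,r,t\mid s)$, so they are constants under $\EE[\,\cdot\mid a_t,r,t]$ — this is what licenses factoring them out. One should also confirm that the coupling in \Cref{eq:l_rf} is the one actually generating $v$, since otherwise $\EE[a_1-a_0\mid a_t=x,t]$ would be the velocity field transporting to $\pi_{\text{old}}$ rather than to the mirror-descent target. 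Granting these, the remainder is the one-line substitution displayed above.
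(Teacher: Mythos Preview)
Your proposal is correct and is essentially the same argument as the paper's. The paper phrases the first step as ``$\nabla_{\theta_n}L_{\text{CMF}}=\nabla_{\theta_n}L_{\text{MF}}$'' by conditioning the gradient on $(a_t,r,t)$, whereas you phrase it as the bias--variance decomposition of the squared loss; both routes use the same conditioning step and the same flow-matching identity $\EE[a_1-a_0\mid a_t,t]=v(a_t,t\mid s)$ to collapse the target to $\mathcal{T}u_{n-1}$.
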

\begin{proof}
% \Bo{The proof should be not only with equation but some sentences. }
We characterize the optimality through the first-order condition, \ie,
\begin{small}
    \begin{align}
    \scriptstyle
    & \nabla_{\theta_n} L_{\text{CMF}}\rbr{\theta_n;s} \nonumber \\
    &= 
    \mathop{\EE}_{a_0,a_1,r,t} \Big[2\rbr{u_{\theta_n}(a_t, r, t|s)- \rbr{\rbr{a_1-a_0} \! - \! \rbr{t-r}\rbr{\rbr{a_1-a_0}\partial_a u_{n-1} + \partial_t u_{n-1}}}}
    \nabla_{\theta_n} u_{\theta_n}\rbr{a_t, r, t|s}\Big]\nonumber \\
    &= \mathop{\EE}_{a_t, r,t} \mathop{\EE}_{a_0, a_1|a_t} \Big[2\rbr{u_{\theta_n}(a_t, r, t|s)\!- \!\rbr{\rbr{a_1\!-\!a_0} \!- \!\rbr{t-r}\rbr{\rbr{a_1\!-\!a_0}\partial_a u_{n-1} \!+ \!\partial_t u_{n-1}}}}\nabla_{\theta_n} u_{\theta_n}\rbr{a_t, r, t|s}\Big]\nonumber \\
    &= \mathop{\EE}_{a_t, r, t}\Big[2 \rbr{u_{\theta_n}(a_t, r, t|s)- \rbr{\EE\sbr{X_1-X_0|X_t=a_t} - \rbr{t-r}\rbr{\EE\sbr{X_1-X_0|X_t=a_t}\partial_a u_{n-1} + \partial_t u_{n-1}}}}\notag \nonumber \\
    &\quad\quad\quad
    \nabla_{\theta_n} u_{\theta_n}\rbr{a_t, r, t|s}\Big] \nonumber \\
    &= \nabla_{\theta_n}\underbrace{\EE_{a_t, r, t} \left[ \|u_{\theta_n}\rbr{a_t, r, t|s}- \rbr{v(a_t, t|s) - \rbr{t-r}\rbr{v(a_t, t|s)\partial_a u_{n-1} + \partial_t u_{n-1}}}\|^2 \right] }_{L_{\text{MF}}(\theta_n;s)}
\end{align}
\end{small}
Therefore,  we can show that the optimal solution $u^*_{\theta_n}$ satisfies:
\begin{equation}
    u_{\theta_n}^{*(\text{CMF})}(a_t, r, t|s)=u_{\theta_n}^{*(\text{MF})}(a_t, r, t|s)=v(a_t, t|s) - \rbr{t-r}\rbr{v(a_t, t|s)\partial_a u_{n-1} + \partial_t u_{n-1}}.
\end{equation}
\end{proof}
\vspace{-0.5cm}
Consequently, this fixed-point iteration view of MeanFlow immediately induces the sufficient condition to guarantee the convergence to target distribution, following the fixed-point theorem~\citep{Banach1922}, \ie, 
% \begin{assumption}[Contraction Condition]\label{asm:contraction condition}
\begin{proposition}\label{asm:contraction condition}
  If the MeanFlow operator $\mathcal{T}$ satisfies the Contraction Condition, \ie, $\exists q \in [0, 1)$ such that $\left\|\mathcal{T}(u_1)-\mathcal{T}(u_2)\right\| \leq q\left\|u_1 - u_2\right\|$ for $\forall u_1, u_2 \in L^2$, then, with any initial point $u_0 \in L^2$, the fixed-point iteration $u_n = \mathcal{T}\rbr{u_{n-1}}$ for $n \geq 1$ converges to $u^*$ with $u^*=\mathcal{T}(u^*)$, which satisfies the MeanFlow Identity in~\cite{geng2025mean}.
\end{proposition}
% \end{assumption}
% \begin{proposition}[Sufficient Condition for Convergence \cite{Banach1922}]
% Under Assumption \ref{asm:contraction condition}, the fixed-point theorem by \citep{Banach1922} ensures that any initial point $u_0 \in L^2$ with $u_n = \mathcal{T}\rbr{u_{n-1}}$ for $n \geq 1$ converges to $u^*$ with $u^*=\mathcal{T}(u^*)$. 

% Applying this result to the MeanFlow operator, we can keep applying the operator to find a solution to the MeanFlow operator.

% \end{proposition}
% \Bo{@Tianyi, you may explain more concrete details about the power iteration viewpoint of meanflow? The current derivation is not very clear. }
% \tianyi{I add more explanations in this subsection on this viewpoint.}

% From Proposition \ref{prop:residual loss}, we establish an equivalence between the optimal solution of $L_{\text{CMF}}\rbr{\theta_n;s}$ and the fixed-point iteration result $\mathcal{T}\rbr{u_{n-1}}$. Consequently, MeanFlow training under \Cref{eq:meanflow_objective} can be viewed as performing one stochastic gradient descent (SGD) step to approximate one fixed-point iteration defined by the MeanFlow operator in \Cref{eq:MeanFlow operator}.

With the target distribution convergence justified, we exploit importance sampling to avoid direct sampling from target distribution $
\pi\rbr{a|s}=\pi_{\text{old}}\rbr{a|s}\frac{\exp\rbr{Q^{\pi_{\rm old}}\rbr{s, a}/\lambda}}{Z(s)}$, which leads to 
\begin{theorem}[MeanFlow Policy Mirror Descent]\label{prop:pmd_loss}
    By sequentially minimizing the loss
    \begin{align}
    L_{\text{MPMD}}\rbr{\theta_n;s} & \coloneqq \mathbb{E}_{a_0, r, t, a_1\sim\pi_{\text{old}}} \Big[ \exp\rbr{Q^{\pi_\text{old}}\rbr{s, a_1}/\lambda} \nonumber \\
    & \left\|u_{\theta_n}(a_t, r, t|s)- \rbr{\rbr{a_1-a_0} - \rbr{t-r}\rbr{\rbr{a_1-a_0}\partial_a u_{n-1} + \partial_t u_{n-1}}}\right\|^2 \Big],
    \end{align}
for $n=1,2,\dots$, the learned $u^*_{\theta_n}(a_t, r, t|s)$ converges to the mean velocity field $u\rbr{a_t, r, t|s}=\frac{\int_r^tv(a_\tau, \tau|s)d\tau}{t-r}$ if $\mathcal{T}$ satisfies the Contraction Condition.
\end{theorem}
The full proof of \Cref{prop:pmd_loss} is provided in \Cref{subsec:proof_theorem5}. Since the learned mean velocity field $u_\theta(a_t, r, t|s)$ converges to the true mean velocity field $u(a_t, r, t|s)$, we recover the target policy distribution $a_1$ by first sampling $a_0 \sim \mathcal{N}(\mu, \sigma^2)$, and then setting $a_1=a_0+u_{\theta}(a_0, 0, 1|s)$.

We take the expectation w.r.t. $L_{\text{MPMD}}\rbr{\theta_n;s}$ over $\mathcal{S}$ and obtain the following practical policy learning loss:
\begin{align}\label{eq:fpmd_m_policy_loss}
    L_{\text{MPMD}}\rbr{\theta_n} & \coloneqq \EE_{s, r, t}\EE_{a_1 \sim \pi_{\text{old}}\rbr{a_1|s}, a_0 \sim \mathcal{N}} \Big[ \exp\rbr{Q^{\pi_\text{old}}\rbr{s, a_1}/\lambda} \nonumber \\
    & \left\|u_{\theta_n}(a_t, r, t|s)- \rbr{\rbr{a_1-a_0} - \rbr{t-r}\rbr{\rbr{a_1-a_0}\partial_a u_{n-1} + \partial_t u_{n-1}}}\right\|^2 \Big].
\end{align}
Although flow policy in \Cref{subsec:pmd_with_flow} can achieve one-step sampling during inference of a trained policy, it still requires multiple sampling steps when sampling from $\pi_{\text{old}}$ during training. MeanFlow policy reduces this computational cost by using one-step sampling throughout the training process.

\section{Flow Policy Mirror Descent Algorithm}\label{sec:algorithm}
In this section, we introduce Flow Policy Mirror Descent (\alg), a practical off-policy RL algorithm achieving strong expressiveness, efficient training and efficient inference. We present two variants, \algrf and \algmf, using the flow and MeanFlow policy parametrization described in \Cref{sec:approach} respectively. An overview of our algorithm is provided in \Cref{alg:fpmd}. 
% \begin{algorithm}
% \caption{Flow Policy Mirror Descent (FPMD)}
% \label{alg:fpmd}
% \begin{algorithmic}[1]
% \REQUIRE $\pi_\phi,Q_{\theta_1},Q_{\theta_2}, \mathcal D, N$, 

% \STATE \textbf{Training routine:}
% \FOR{epoch $e=1,\dots,T$}
%   \STATE $a=$\textbf{get\_action}($s, \sigma_e$)
%   \STATE $s'\sim P(\cdot\mid s,a)$ 
%   \STATE $\mathcal D\leftarrow\mathcal D\cup(s,a,R(s,a),s')$  
%   \STATE \textbf{update\_critic}($\mathcal D$)
%   \STATE \textbf{update\_actor}($\mathcal D$)
% \ENDFOR
% \smallskip
% \smallskip
% \Procedure{update\_actor}{$\mathcal D$}
%   \STATE Sample $\{s'\}\sim\mathcal D$ 
%   \STATE Compute $\mathcal L_\phi$ using \Cref{eq:fpmd_r_policy_loss} or \Cref{eq:fpmd_m_policy_loss}
%   \STATE $\phi\leftarrow\phi-\alpha\nabla_{\phi}\mathcal L_{\phi}$
% \EndProcedure
% \smallskip
% \Procedure{get\_action}{$s, \sigma$}
%   \STATE Sample N actions $\{a_i\}_{i=1}^N$ with flow policy $\pi_\phi$ and select one according to $a=\argmax\limits_{a_i}Q(s, a_i)$
%   \STATE $a \leftarrow a+\mathcal{N}(0, \sigma^2)$
% \EndProcedure
% \end{algorithmic}
% \end{algorithm}

\begin{algorithm}
\caption{Flow Policy Mirror Descent (FPMD)}
\label{alg:fpmd}
\begin{algorithmic}[1]
\REQUIRE initial policy $\pi_\theta$, Q-function $Q_\phi$, replay buffer $\mathcal{D}=\emptyset$, MDP $\mathcal{M}$, total epochs $T$
\FOR{epoch $e=1,2,\dots,T$}
\STATE Interact with $\mathcal{M}$ using policy $\pi_\theta$ and update replay buffer $\mathcal{D}$
\STATE Sample batch $\{(s, a, r, s')\}\sim \mathcal{D}$
\STATE Sample $a'$ via flow sampling with $\pi_\theta$
\STATE \textbf{Critic learning}: update $Q_\phi$ by minimizing double Q-learning loss in \Cref{eq:critic_loss}
\STATE \textbf{Actor learning}: represent $\pi_\theta$ by flow policy (FPMD-R) or MeanFlow policy (FPMD-M):
% \textcolor{blue}{$\triangleright$ Train flow policy}
\STATE$\left\{
  \begin{aligned}
    &\text{FPMD-R: Sample }a_0\sim\mathcal{N},t\sim\mathcal{U}[0,1],\text{sample }a_1\text{ via flow sampling with }\pi_\theta,\\
    &\quad\text{Update }\pi_\theta\text{ by minimizing \Cref{eq:fpmd_r_policy_loss}}\\
    &\text{FPMD-M: Sample }a_0\sim\mathcal{N}, \text{sample } r, t, \text{sample }a_1\text{ via flow sampling with }\pi_\theta,\\
    &\quad\text{Update }\pi_\theta\text{ by minimizing \Cref{eq:fpmd_m_policy_loss}} 
  \end{aligned}
\right.$
\ENDFOR
\end{algorithmic}
\end{algorithm}
% \begin{algorithm}
% \caption{Critic Learning}
% \label{alg:critic_learning}
% \begin{algorithmic}[1]
% \Procedure{UPDATECRITIC}{$\mathcal D$}
%   \STATE Sample $\{(s,a,r,s')\}\sim\mathcal D$  
%   \STATE $a'=$\textbf{GETACTION}($s', \sigma_e$)
%   \STATE Compute $\mathcal L_{\theta_1}$ and $\mathcal L_{\theta_2}$ using \Cref{eq:critic_loss}
%   \STATE $\theta_k\leftarrow\theta_k-\alpha\nabla_{\theta_k}\mathcal L_{\theta_k}\quad\forall k\in\{1,2\}$ 
%   \STATE $\bar\theta_k\leftarrow(1-\tau)\bar\theta_k+\tau\,\theta_k\quad\forall k\in\{1,2\}$ 
% \EndProcedure
% \end{algorithmic}
% \end{algorithm}
\vspace{-0.4cm}
\paragraph{Actor-Critic Algorithm}
Our training follows the standard off-policy actor-critic paradigm:
\vspace{-0.2cm}
\begin{itemize}[leftmargin=*]
    \item \textbf{Critic learning:} we employ clipped double Q-learning \citep{fujimoto2018addressing} and use n-step return estimation \citep{barth2018distributed} in visual control environments. See \Cref{subsec:appendix_implementation_detail} for more details. 
    \vspace{-0.1cm}
    \item \textbf{Actor learning:} we parametrize the policy distribution with flow model in \algrf and MeanFlow model in \algmf for flexible distribution modeling. The policy is updated to fit the policy mirror descent closed-form solution \Cref{eq:pwd closed form} for policy improvement.
% using \Cref{eq:fpmd_r_policy_loss} (\algrf ) and \Cref{eq:fpmd_m_policy_loss} (\algmf). 
At each iteration, we sample $a_1$ from $\pi_{\mathrm{old}}(a_1|s)$ via flow sampling with the current policy network parameters, sample $a_0$ from Gaussian distribution, and then compute the practical loss in \Cref{eq:fpmd_r_policy_loss} for \algrf or \Cref{eq:fpmd_m_policy_loss} for \algmf to  run gradient descent. 
\end{itemize}
\paragraph{Number of Sampling Steps}
Sampling step number significantly influences the inference speed and sample quality of flow models. For \algrf, we take 20 sampling steps during training to accurately model the potentially high-variance intermediate policy distributions. During evaluation, we switch to one-step sampling instead to test its efficient inference capability. For \algmf we use one sampling step in both training and evaluation due to the average velocity parametrization.  

\section{Experiments}\label{sec:exps}
In this section, we present the empirical results of our proposed online RL algorithms \algrf and \algmf. First, we demonstrate the superior performance and inference speed of \alg with comparison to prior Gaussian and diffusion policy methods. For a comprehensive evaluation, we benchmark on both proprioceptive state observation Gym MuJoCo \citep{6386109} and visual observation DMControl \citep{tassa2018deepmind} environments. We then visualize the action sampling trajectory for an intuitive understanding.
\subsection{Gym-MuJoCo Tasks}
% \paragraph{Environments and Training Details}
% We evaluate the performance on 10 OpenAI Gym MuJoCo v4 environments. For all environments except Humanoid-v4, we train the policy for 200K iterations with 1M environment steps. For Humanoid-v4, we train for 1M iterations and 5M environment steps due to its more complex dynamics and higher-dimensional action space. For \algrf, the sampling step is set to 20 during training and 1 during evaluation. For \algmf, we set the sampling step as 1 in both training and evaluation.
% \paragraph{Baselines}
% We compare our method to two families of model-free online RL algorithms. The first family is classic RL algorithms with Gaussian policy parametrization, including PPO \citep{schulman2017proximal}, TD3 \citep{fujimoto2018addressing} and SAC \citep{haarnoja2018soft}. Those methods use 1-NFE (Number of Function Evaluations) sampling for action. The second family includes online diffusion policy algorithms DIPO \citep{yang2023policy}, DACER \citep{wang2024diffusion}, QSM \citep{psenka2023learning}, QVPO \citep{ding2024diffusion} and DPMD \citep{ma2025efficient}. These methods require multiple sampling steps to generate high-quality actions in both training and inference. 
\paragraph{Experiment Settings}
We evaluate the performance on 10 Gym MuJoCo v4 environments. For all environments except Humanoid-v4, we train the policy for 200K iterations with 1M environment steps. For Humanoid-v4, we train for 1M iterations and 5M environment steps due to its more complex dynamics and higher-dimensional action space. We compare our method to two families of model-free online RL algorithms spanning both Gaussian and diffusion policy methods. See \Cref{subsec:baseline} for more details.
\vspace{-0.2cm}
\paragraph{Comparative Evaluation}
\begin{wrapfigure}{r}{0.4\textwidth}
\vspace{-0.7cm}
    \centering
    \includegraphics[width=0.4\textwidth]{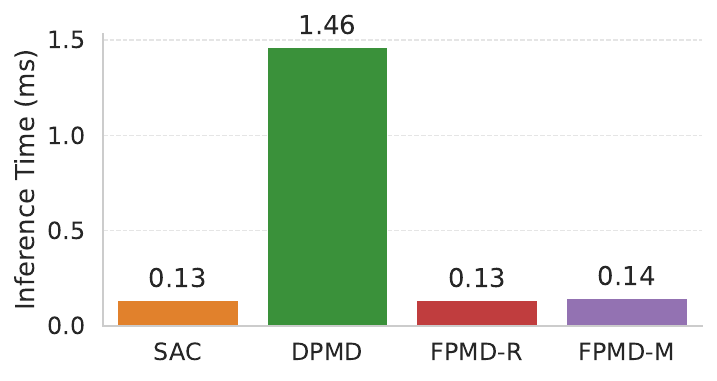}
    \caption{Policy inference time comparison between FPMD, Gaussian policy method SAC, and diffusion policy method DPMD.}
    \label{fig:mujoco_time_comparsion}
\vspace{-0.5cm}
\end{wrapfigure}
As shown in \Cref{tab:performance}, \alg achieves comparable performance with the best diffusion policy baseline while using $20\times$ fewer sampling steps during inference. Moreover, among methods using NFE=1 sampling, \alg obtains the best overall performance. 
\begin{table*}[ht]
    \centering
    \vspace{-5pt}
    \caption{Results on OpenAI Gym MuJoCo environments. Reported are the best mean returns and standard deviations over 5 random seeds. Values highlighted in blue correspond to the method achieving the best result among all the algorithms, and values highlighted in green indicate the best result among all methods with NFE=1 sampling.}
    \vspace{5pt}
    \label{tab:performance}
    
    \resizebox{\textwidth}{!}{%
    \begin{tabular}{lllllll}
    \toprule
    & & \textsc{HalfCheetah} &
    \textsc{Reacher} &
    \textsc{Humanoid} & 
    \textsc{Pusher} &
    \textsc{InvertedPendulum} \\
    \midrule
    \multirow{3}{*}{\textbf{\begin{tabular}{@{}l@{}}Classic \\ Model-Free RL\end{tabular}}}
     & PPO (NFE=$1$)
       & $4852 \pm 732$
       & $-8.69 \pm 11.50$
       & $952 \pm 259$
       & $-25.52 \pm 2.60$
       & $\cellcolor{blue!15}1000 \pm 0$ \\
     & TD3 ((NFE=$1$)
       & $8149 \pm 688$
       & $\cellcolor{blue!15}-3.10 \pm 0.07$
       & $5816 \pm 358$
       & $-25.07 \pm 1.01$
       & $\cellcolor{blue!15}1000 \pm 0$ \\
     & SAC (NFE=$1$)
       & $8981 \pm 370$ 
       & $-65.35 \pm 56.42 $ 
       & $2858 \pm 2637$
       & $-31.22 \pm 0.26$
       & $\cellcolor{blue!15}1000 \pm 0$ \\
    \midrule
    \multirow{5}{*}{\textbf{Diffusion Policy RL}}
     & DIPO (NFE=$20$)
     & $9063 \pm 654$ 
     & $-3.29 \pm 0.03 $
     & $4880 \pm 1072$
     & $-32.89 \pm 0.34$
     & $\cellcolor{blue!15}1000 \pm 0$ \\
     & DACER (NFE=$20$)&  $11203 \pm 246$
     & $-3.31 \pm 0.07 $
     & $2755 \pm 3599$
     & $-30.82 \pm 0.13$
     & $801 \pm 446 $ \\
     & QSM (NFE=$20 \times 32$ \protect\footnotemark)
     & $10740 \pm 444$
     & $-4.16 \pm 0.28 $
     & $5652 \pm 435$
     & $-80.78 \pm 2.20$
     & $\cellcolor{blue!15}1000 \pm 0$ \\
     & QVPO (NFE=$20\times32$)
     & $7321 \pm 1087$
     & $-30.59 \pm 16.57$
     & $421 \pm 75$
     & $-129.06 \pm 0.96$
     & $\cellcolor{blue!15}1000 \pm 0$\\
     & DPMD (NFE=$20\times32$)&  $\cellcolor{blue!15}11924 \pm 609$
     & $-3.14 \pm 0.10$
     & $\cellcolor{blue!15}6959 \pm 460$
     & $-30.43 \pm 0.37$
     & $\cellcolor{blue!15}1000 \pm 0$ \\
     \midrule
     \multirow{2}{*}{\textbf{Flow Policy RL}}
     & \textbf{RF-1 (NFE=$1$)} & $\cellcolor{green!15}
     10163 \pm 590$
     & $-3.32 \pm 0.22$
     & $\cellcolor{green!15}6469 \pm 344$
     & $-23.29 \pm 1.63$
     & $\cellcolor{blue!15}1000 \pm 0$\\
     & \textbf{MF (NFE=$1$)} & $9917 \pm 698$ 
     & $-3.34 \pm 0.15$
     & $6030 \pm 664$
     & $\cellcolor{blue!15}-23.08 \pm 0.58$
     & $\cellcolor{blue!15}1000 \pm 0$\\
    \toprule
    & & \textsc{Ant} & \textsc{Hopper} 
       & \textsc{Swimmer} 
       & \textsc{Walker2d}& \textsc{Inverted2Pendulum}  \\
    \midrule
    \multirow{3}{*}{\textbf{\begin{tabular}{@{}l@{}}Classic \\ Model-Free RL\end{tabular}}}
     & PPO (NFE=$1$)
       & $3442 \pm 851$
       & $3227 \pm 164$
       & $\cellcolor{green!15}84.5 \pm 12.4$
       & $4114 \pm 806 $
       & $9358 \pm 1$ \\
     & TD3 (NFE=$1$)
       & $3733 \pm 1336$
       & $1934 \pm 1079$
       & $71.9 \pm 15.3$
       & $2476 \pm 1357$
       & $\cellcolor{blue!15}9360 \pm 0$ \\
     & SAC (NFE=$1$)
       & $2500 \pm 767$
       & $3197 \pm 294 $
       & $63.5 \pm 10.2$
       & $3233 \pm 871$
       & $9359 \pm 1$\\
    \midrule
    \multirow{5}{*}{\textbf{Diffusion Policy RL}}
     & DIPO (NFE=$20$)
     & $965 \pm 9 $
     & $1191 \pm 770 $
     & $46.7 \pm 2.9$
     & $1961 \pm 1509 $
     & $9352 \pm 3$
     \\
     & DACER (NFE=$20$)
     & $4301 \pm 524 $
     & $3212 \pm 86$
     & $\cellcolor{blue!15}103.0 \pm 45.8$
     & $3194 \pm 1822 $
     & $6289 \pm 3977$\\
     & QSM (NFE=$20\times32$)
     & $938 \pm 164 $
     & $2804 \pm 466 $
     & $57.0 \pm 7.7$
     & $2523 \pm 872 $
     & $2186 \pm 234$\\
     & QVPO (NFE=$20\times32$)
     & $718 \pm 336$
     & $2873 \pm 607$
     & $53.4 \pm 5.0$
     & $2337 \pm 1215$
     & $7603 \pm 3910$\\
     & DPMD (NFE=$20\times32$)
     & $\cellcolor{blue!15}5683 \pm 138$
     & $\cellcolor{blue!15}3275 \pm 55$
     & $79.3 \pm 52.5$
     & $4365 \pm 266$
     & $\cellcolor{blue!15}9360 \pm 0$\\
     \midrule
     \multirow{2}{*}{\textbf{Flow Policy RL}}
     & \textbf{RF-1 (NFE=$1$)} &  $5378 \pm 78$
     & $\cellcolor{green!15}3255 \pm 86$
     & $60.2 \pm 10.6$
     & $3973 \pm 541$
     & $9359 \pm 1$\\
     & \textbf{MF (NFE=$1$)} & $\cellcolor{green!15}5461 \pm 147$
     & $2865 \pm 603$
     & $54.7 \pm 10.2$
     & $\cellcolor{blue!15}4404 \pm 285$
     & $9355 \pm 2$\\
    \bottomrule
    \end{tabular}%
    }
    \vspace{-0.5cm}
    \end{table*}
To demonstrate the efficient policy inference ability of \alg, we showcase the inference time in \Cref{fig:mujoco_time_comparsion}. Inference speed was measured on a single RTX 6000 GPU and averaged over 10 rollouts on Ant-v4. All methods were implemented in JAX with JIT enabled and we performed one warm-start rollout that was not included in the measurements. As expected, both variants of \alg achieve inference times more than $10\times$ faster than the diffusion policy method DPMD and match the speed of Gaussian policy.
\vspace{0.1cm}
\subsection{Visual RL Tasks}
\begin{wrapfigure}{r}{0.4\textwidth}
    \vspace{-0.5cm}
    \centering
    \includegraphics[width=0.4\textwidth]{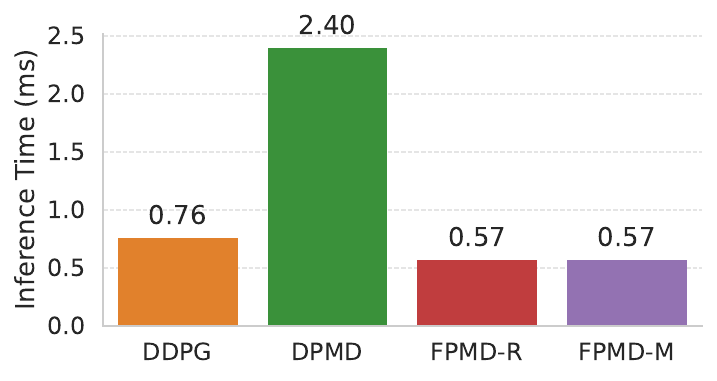}
    \caption{Policy inference time comparison between FPMD, Gaussian policy method DDPG, and diffusion policy method DPMD.}
    \vspace{-0.3cm}
    \label{fig:dmc_time_comparsion}
\end{wrapfigure}
\footnotetext{Here, 32 denotes the number of particles used for the best-of-N sampling mentioned in \Cref{subsec:appendix_implementation_detail}.}
We next evaluate \alg on 8 visual-input continuous control tasks from the DeepMind Control Suite \citep{tassa2018deepmind}. We compare \alg against three policy learning algorithms: SAC \citep{haarnoja2018soft}, DDPG \citep{lillicrap2015continuous} and DPMD \citep{ma2025efficient}. The implementations of SAC and DDPG are based on DrQ \citep{kostrikov2020image} and DrQ-v2 \citep{yaratsmastering} respectively. As there are currently no online diffusion policy methods that achieve competitive results on visual DMControl tasks, we implement a DPMD variant that aligns with \alg in all components except the policy learning. As model-based algorithm use additional model learning losses and more training computation, it is not strictly fair to compare model-free and model-based algorithms. We only include the DreamerV3 \citep{hafner2023mastering} results for reference.
\Cref{tab:dmc_performance} and \Cref{fig:dmc_performance} show the benchmarking results on visual control tasks. \alg achieves higher overall performance than all model-free Gaussian policy baselines, with significant improvement in the most challenging dog domain. While matching the performance of the diffusion policy method DPMD, \alg requires far fewer sampling steps and achieves over $4\times$ faster inference during evaluation, as shown in \Cref{fig:dmc_time_comparsion}.

Comparing our two flow policy variants \algrf and \algmf, we observe that the performance of \algmf slightly falls behind \algrf on all tasks except the relatively simple Finger Spin task. This gap could be due to the suboptimal action samples during training. Despite this, \algmf still has a clear advantage over the Gaussian policy baselines on most tasks. Since the MeanFlow policy parametrization can reduce the computational cost in sampling from $\pi_{\text{old}}$ during training, how to improve the performance of MeanFlow policy in visual control is a direction worth further investigation.
\begin{table*}[t]
    \centering
    \vspace{-5pt}
    \caption{Evaluation on DMControl. The numbers show the best mean returns and standard deviations over 1M frames and 5 random seeds.\tablefootnote{We use 3M frames for tasks of the dog domain.} Values highlighted in blue correspond to the method achieving the best result among all the model-free algorithms, and values highlighted in green indicate the best result among all model-free methods with NFE=1 inference sampling.}
    \vspace{5pt}
    \label{tab:dmc_performance}
    
    \resizebox{\textwidth}{!}{%
    \begin{tabular}{lllllllll}
    \toprule
    & & \textsc{Cup Catch} &
    \textsc{Walker Walk} &
    \textsc{Cartpole Swingup} &
    \textsc{Finger Spin} \\
    \midrule
    \multirow{2}{*}{Gaussian Policy}
      & SAC \tablefootnote{Implementation based on DrQ-v1.} (NFE=$1$)  & $956.80 \pm 14.58$ 
              & $865.29 \pm 67.17$
              & $866.41 \pm 10.27$
              & $\cellcolor{blue!15}927.09 \pm 62.33$ \\
      & DDPG \tablefootnote{Implementation based on DrQ-v2.} (NFE=$1$) & $967.59 \pm 4.91$
              & $215.29 \pm 413.94$
              & $\cellcolor{blue!15}867.63 \pm 10.90$
              & $752.10 \pm 178.04$ \\
    \midrule
    Diffusion Policy
      & DPMD (NFE=$20\times32$) & $\cellcolor{blue!15}979.70 \pm 1.91$
              & $\cellcolor{blue!15}957.39 \pm 13.18$
              & $843.23 \pm 17.00$
              & $856.03 \pm 13.35$ \\
    \midrule
    \multirow{2}{*}{Flow Policy}
      & \textbf{FPMD-R (NFE=$1$)} & $\cellcolor{green!15}977.39 \pm 2.00$
                        & $\cellcolor{green!15}957.05 \pm 5.39$
                        & $863.47 \pm 7.50$
                        & $862.29 \pm 42.98$ \\
      & \textbf{FPMD-M (NFE=$1$)} & $974.96 \pm 7.05$
                        & $956.35 \pm 10.78$
                        & $849.88 \pm 7.56$
                        & $898.55 \pm 68.76$ \\
    \midrule 
    \color{gray!90}
    Model-based & \color{gray!90}DreamerV3 & \color{gray!90}$979.70 \pm 1.34$
    & \color{gray!90}$967.28 \pm 3.76$
    & \color{gray!90}$864.56 \pm 9.70$
    & \color{gray!90}$622.25 \pm 164.38$
    \\
    \toprule
    & & \textsc{Cheetah Run} &
    \textsc{Dog Stand} &
    \textsc{Dog Trot} &
    \textsc{Dog Walk} \\
    \midrule
    \multirow{2}{*}{Gaussian Policy}
      & SAC (NFE=$1$)   & $507.70 \pm 40.32$
              & $306.84 \pm 254.63$
              & $92.60 \pm 22.31$
              & $87.55 \pm 68.23$ \\
      & DDPG (NFE=$1$) & $623.30 \pm 104.36$
              & $321.26 \pm 200.80$
              & $94.01 \pm 23.82$
              & $109.33 \pm 52.31$ \\
    \midrule
    Diffusion Policy
      & DPMD (NFE=$20\times32$) & $631.74 \pm 32.43$
              & $\cellcolor{blue!15}617.15 \pm 97.13$
              & $\cellcolor{blue!15}113.93 \pm 56.68$
              & $\cellcolor{blue!15}245.73 \pm 67.56$ \\
    \midrule
    \multirow{2}{*}{Flow Policy}
      & \textbf{FPMD-R (NFE=$1$)} & $\cellcolor{blue!15}633.90 \pm 18.87$
                        & $\cellcolor{green!15}599.92 \pm 168.44$
                        & $\cellcolor{green!15}101.55 \pm 30.09$
                        & $\cellcolor{green!15}221.08 \pm 137.77$ \\
      & \textbf{FPMD-M (NFE=$1$)} & $619.97 \pm 51.30$
                        & $442.46 \pm 246.38$
                        & $94.10 \pm 35.99$
                        & $211.36 \pm 148.37$ \\
    \midrule 
    \color{gray!90}Model-based & \color{gray!90}DreamerV3 & \color{gray!90}$883.82 \pm 4.57$
    & \color{gray!90}$542.12 \pm 295.74$
    & \color{gray!90}$127.07 \pm 44.50$
    & \color{gray!90}$139.54 \pm 12.51$
    \\
    \bottomrule
    \end{tabular}%
    }
\vspace{-0.2cm}
\end{table*}
\begin{figure}[h]
    \centering
    \includegraphics[width=1.01\linewidth]{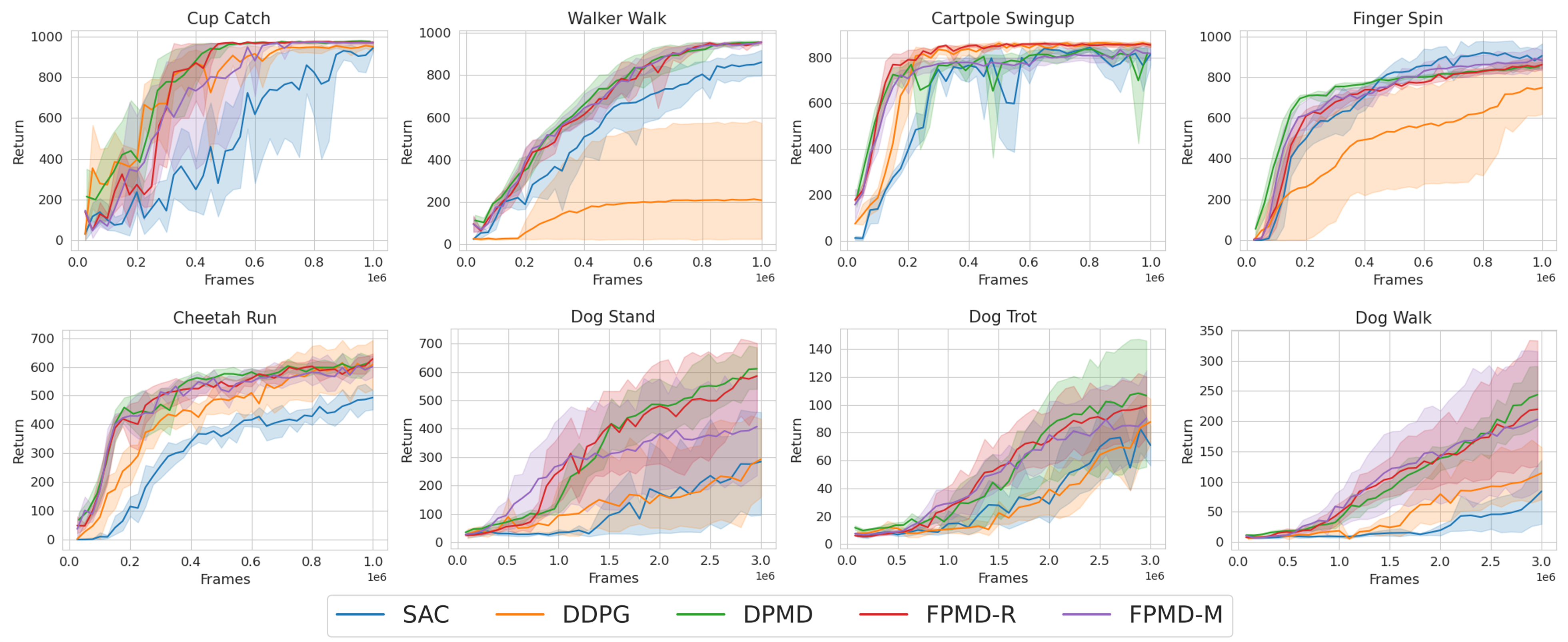}
    \caption{Performance curves on visual continuous control tasks. \alg outperforms all baselines with NFE=1 sampling.}
    \label{fig:dmc_performance}
    \vspace{-0.3cm}
\end{figure}
\subsection{Sampling Trajectory Visualization}

We visualize the sampling trajectory to provide an intuitive demonstration of the small discretization error in single-step sampling of a well-trained \algrf policy. For comparison, we also visualize the sampling trajectory of the representative diffusion policy algorithm DPMD \citep{ma2025efficient}. We train the agents on Ant-v4 and plot the sampling trajectory of the first 2 action dimensions. Results are shown in \Cref{fig:sampling_traj}.
\begin{figure}[h]
    \centering
    \includegraphics[width=\linewidth]{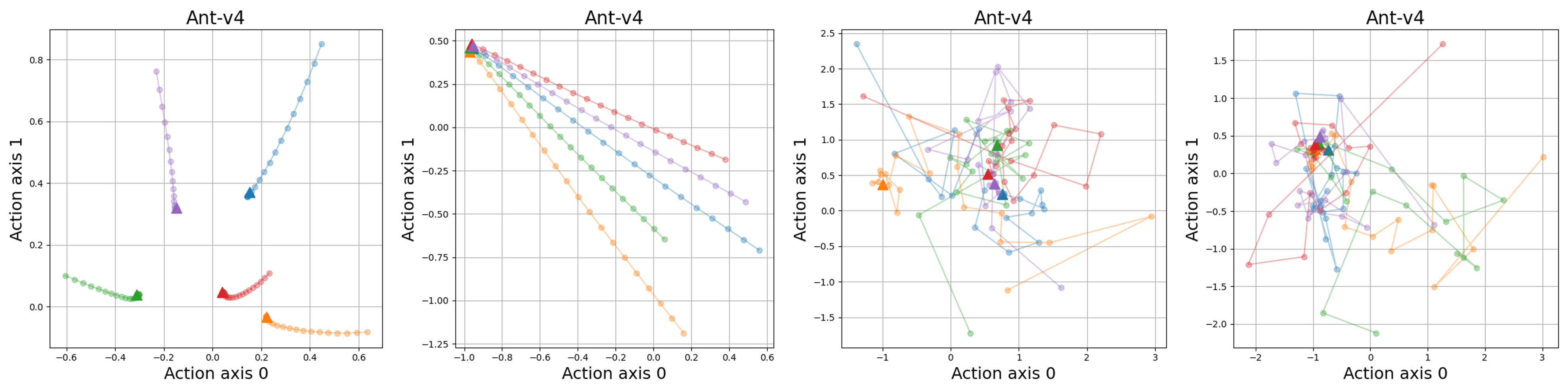}
    \caption{Sampling trajectories of \algrf and DPMD policy after 5K and 200K training iterations. From left to right: \algrf policy trained for 5K iterations, \algrf policy trained for 200K iterations, DPMD policy trained for 5K iterations, DPMD policy trained for 200K iterations.}
    \label{fig:sampling_traj}
    \vspace{-0.3cm}
\end{figure}

In the early stage of training, the policy mirror descent target distribution has a large variance, and the velocity varies significantly during the sampling process. Performing single-step sampling in this stage results in a large discretization error. By contrast, in the final stage of training, the target distribution exhibits small variance and the velocity is almost constant throughout the sampling process. Consequently, single-step sampling achieves small discretization error. However, for diffusion policy such as DPMD, directly performing single-step sampling would cause large error as demonstrated in \Cref{fig:sampling_traj}.

\section{Conclusion}
In this paper, we exploit the intrinsic connection between policy distribution variance and the discretization error of single-step sampling in straight interpolation flow matching. This insight leads to Flow Policy Mirror Descent (FPMD), an online RL algorithm that enables single-step sampling during policy inference while preserving expressive capability during training. We further present two algorithm variants based on flow and MeanFlow policy parametrizations respectively. Evaluation results on MuJoCo and DMControl benchmarks demonstrate performance comparable to diffusion policy methods while requiring orders of magnitude less computational cost during inference. Future directions include extending FPMD to pretrained flow model finetuning and developing similar techniques for discrete decision-making domains.
\bibliography{arxiv}
\bibliographystyle{iclr2026_conference}
\newpage
\appendix
\section{Derivations}
\subsection{Proof of Theorem 5}
\label{subsec:proof_theorem5}
\begin{theorem*}[MeanFlow Policy Mirror Descent]
By sequentially minimizing the loss
    \begin{align}
    L_{\text{MPMD}}\rbr{\theta_n;s} & \coloneqq \mathbb{E}_{a_0, r, t, a_1\sim\pi_{\text{old}}} \Big[ \exp\rbr{Q^{\pi_\text{old}}\rbr{s, a_1}/\lambda} \nonumber \\
    & \left\|u_{\theta_n}(a_t, r, t|s)- \rbr{\rbr{a_1-a_0} - \rbr{t-r}\rbr{\rbr{a_1-a_0}\partial_a u_{n-1} + \partial_t u_{n-1}}}\right\|^2 \Big],
    \end{align}
for $n=1,2,\dots$, the learned $u^*_{\theta_n}(a_t, r, t|s)$ converges to the mean velocity field $u\rbr{a_t, r, t|s}=\frac{\int_r^tv(a_\tau, \tau|s)d\tau}{t-r}$ if $\mathcal{T}$ satisfies the Contraction Condition.
\end{theorem*}
\begin{proof}
By substituting the target distribution $ \pi_{\text{old}}\rbr{a_1|s}\exp\rbr{Q^{\pi_\text{old}}\rbr{s, a_1}/\lambda}/Z(s)$ into \eqref{eq:l_rf} and applying importance sampling, we obtain the per-state loss:
\begin{align}\label{eq:mpmd_loss_per_state}
    \tilde L_{\text{MPMD}}\rbr{\theta_n;s} & \coloneqq \mathbb{E}_{a_0, r, t, a_1\sim\pi_{\text{old}}} \Big[ \exp\rbr{Q^{\pi_\text{old}}\rbr{s, a_1}/\lambda}/Z(s) \nonumber \\
    & \left\|u_{\theta_n}(a_t, r, t|s)- \rbr{\rbr{a_1-a_0} - \rbr{t-r}\rbr{\rbr{a_1-a_0}\partial_a u_{n-1} + \partial_t u_{n-1}}}\right\|^2 \Big].
\end{align}
Observe that for each fixed $s$, $Z(s)=\int \pi_{\text{old}}\rbr{a|s}\exp\rbr{Q\rbr{s, a}/\lambda}da>0$ and is a constant independent of $\theta, a_1, a_0$, and $t$. Since multiplying an optimization objective by a positive constant does not change its minimizer, 
\begin{equation*}
    \argmin\limits_{\theta_n}\tilde L_{\text{MPMD}}(\theta_n;s)=\argmin\limits_{\theta_n}Z(s) \tilde L_{\text{MPMD}}(\theta_n;s)=\argmin\limits_{\theta_n} L_{\text{MPMD}}(\theta_n;s), \forall s\in \mathcal{S}.
\end{equation*}
% Therefore, we define the following loss function that has the same optimal solution as \Cref{eq:mpmd_loss_per_state}.
% We then multiply the per-state weighting function $g(s)=Z(s)$ which does not change the optimal solution, and arrive at the following practical policy training loss function,\Bo{same as above, specify the rationale why  $Z(s)$ can be cancelled.}
Using Proposition \ref{prop:residual loss},
\begin{equation*}
    u_{\theta_n}^{*(\text{MPMD})}(a_t, r, t|s)=u_{\theta_n}^{*(\text{CMF})}(a_t, r, t|s)=v(a_t, t|s) - \rbr{t-r}\rbr{v(a_t, t|s)\partial_a u_{n-1} + \partial_t u_{n-1}}
\end{equation*}
Then using Proposition \ref{asm:contraction condition}, fixed point iteration theory implies 
\begin{equation*}
    \lim_{n\to\infty}u_{\theta_n}^{*(\text{MPMD})}(a_t, r, t|s)=u\rbr{a_t, r, t|s}=\frac{\int_r^tv(a_\tau, \tau|s)d\tau}{t-r}
\end{equation*}
\end{proof}
\section{Additional Results}
\subsection{Ablation Study}
\begin{figure}[h]
    \centering
    \begin{subfigure}{0.48\textwidth}
        \centering
        \includegraphics[width=\linewidth]{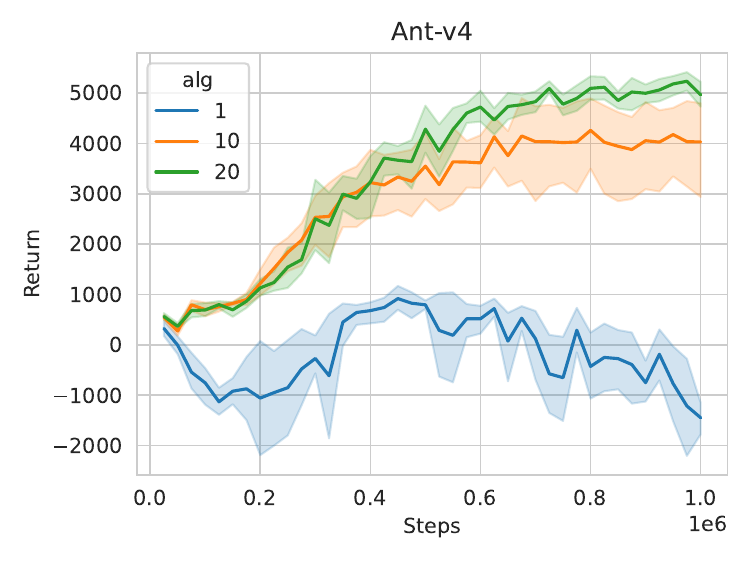}
        \caption{Ablation study on training sampling steps.}
        \label{subfig:sampling_step_ablation_study}
    \end{subfigure}
    \hfill
    \begin{subfigure}{0.48\textwidth}
        \centering
        \includegraphics[width=\linewidth]{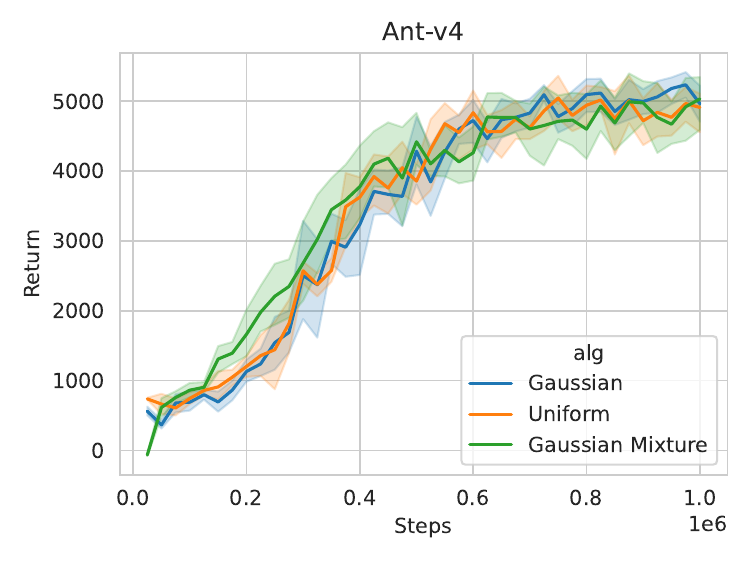}
        \caption{Ablation study on flow source distributions.}
        \label{subfig:source_distribution_ablation_study}
    \end{subfigure}
\end{figure}
To assess how the number of sampling steps and source distribution affect policy performance, we conduct ablation experiments on the Gym MuJoCo Ant-v4 environment. In \Cref{subfig:sampling_step_ablation_study}, we report results for different numbers of sampling steps when sampling from $\pi_{\mathrm{old}}$ in \algrf training. The results show that using too few sampling steps either fails to learn or results in suboptimal performance, likely due to the large discretization error in the early stage of training. We further study the effect of the source distribution by varying it across Gaussian distribution, bounded uniform distribution and a mixture of 2 Gaussians. As presented in \Cref{subfig:source_distribution_ablation_study}, all the three variants perform similarly, so we select the commonly used Gaussian distribution for the main experiments. 
\subsection{Additional Sampling Trajectory Examples}
We provide additional \algrf action sampling trajectories in \Cref{fig:sampling_traj_appendix}. The flow policy is trained on Ant-v4 and we plot the sampling trajectories of the first 2 action dimensions throughout the training process. As shown in \Cref{fig:sampling_traj_appendix}, in the early stage of training the velocity learned by the policy network varies significantly in the sampling process, which leads to large discretization error for few step sampling. In contrast, once the policy is fully trained, the sampling velocity is nearly constant, enabling single step sampling with high accuracy. 
\begin{figure}\label{fig:additional_sampling_curves}
    \centering
    \includegraphics[width=\linewidth]{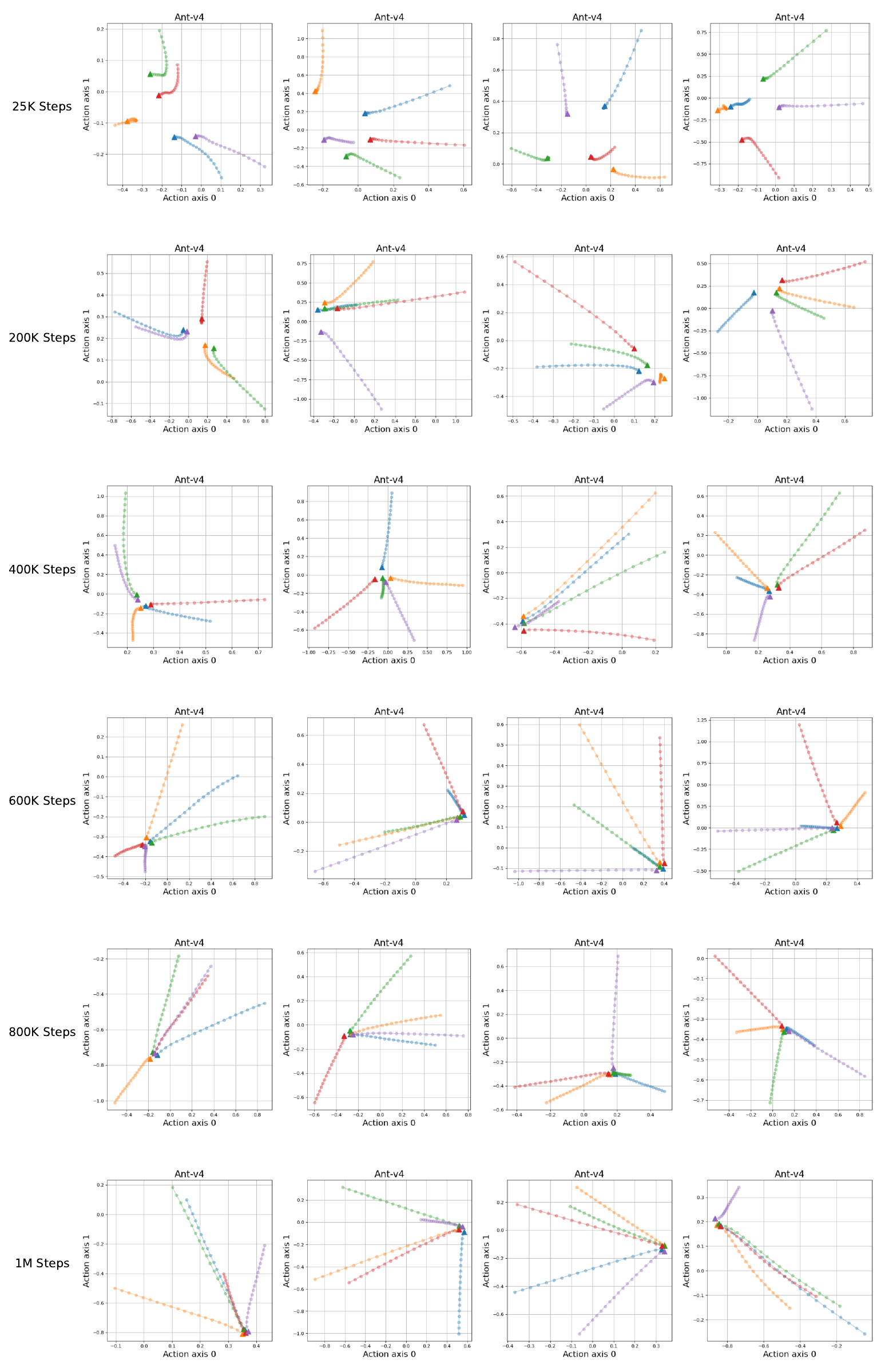}
    \caption{Sampling trajectories of \algrf on Ant-v4. At the beginning of training, the sampling trajectories are highly curved with non-uniform spacing between points of adjacent timesteps. As training proceeds, the trajectories become nearly straight with uniform spacing between points.}
    \label{fig:sampling_traj_appendix}
\end{figure}
\section{Experimental Details}
\label{sec:exp_details}
\subsection{Implementation and Training Details}
\label{subsec:appendix_implementation_detail}
\paragraph{Network Architecture}
Both policy network and critic network in \alg are MLPs with Mish \citep{misra2019mish} activations. We encode the flow time $t$ using the sinusoidal position embedding \citep{vaswani2017attention} and concatenate it to the policy network input. For visual control environments, we adopt the same convolutional encoder architecture as in \cite{yarats2021improving}, \cite{kostrikov2020image} and \cite{yaratsmastering}. The visual encoder is updated  using only gradient from the critic loss \Cref{eq:critic_loss_nstep}.
\paragraph{Selecting from Action Candidates}
Selecting from behavior candidates 
% was introduced by \citep{chen2022offline} for offline diffusion policy. Similar technique has been used in online diffusion policy methods \citep{ding2024diffusion, ma2025efficient} 
has been used in diffusion policy to concentrate the selected actions in high $Q$-value regions and improve sample efficiency \citep{chen2022offline, ding2024diffusion, ma2025efficient}. Instead of sampling only \textit{one} action from flow policy, we sample $N$ actions for each state and select the action with the highest $Q$-value $a=\argmax\limits_{a_i}Q(s, a_i)$.
% We leverage this trick in the training of our algorithms and 
We also add additional Gaussian noise with scheduled variance to the sampled actions for better exploration.
% The injected Gaussian noise benefits better exploration and avoids the potential issue in weighted regression RL of lacking mode-seeking ability \citep{park2024value}. 
During evaluation, we use the action sampled directly from the flow model with a single step to test the efficient inference ability.
\paragraph{Critic Training}
For critic learning, we follow the common practice and employ clipped double Q-learning \citep{fujimoto2018addressing} to reduce overestimation in the target value. 
We empirically find that in state-based environments n-step return estimation \citep{barth2018distributed} leads to worse performance in flow policy, so we remain the 1-step setting. For visual control tasks, we use 3-step return estimation for faster reward propagation. Considering these empirical findings, the critic loss is
\begin{equation}\label{eq:critic_loss}
    L_{\theta_k}=\mathbb{E}_{s, a, s', a'}\sbr{\rbr{Q_{\theta_k}(s, a)-\rbr{r+\gamma\min_{k=1, 2}Q_{\bar{\theta}_k}(s', a')}}^2}\quad \forall k \in \{1, 2\}.
\end{equation} for state-based environments and \Cref{eq:critic_loss_nstep} for visual control environments. 
\paragraph{Visual Reinforcement Learning} 
As is common in model-free visual RL algorithms \citep{yarats2021improving, kostrikov2020image}, we encode raw pixel observations with a convolutional network and use the resulting latent feature as input to the policy and critic network. Following prior work \citep{kostrikov2020image}, we augment image observations with random image shifts and use the n-step critic loss
\begin{equation}\label{eq:critic_loss_nstep}
    L_{\theta_k}=\mathbb{E}_{\{s_{t+i}, a_{t+i}\}_{i=0}^n\sim\mathcal{D}}\sbr{\rbr{Q_{\theta_k}(s_t, a_t)-\rbr{\sum_{i=0}^{n-1}\gamma^ir_{t+i}+\gamma^n\min_{k=1, 2}Q_{\bar{\theta}_k}(s_{t+n}, a_{t+n})}}^2}\quad \forall k \in \{1, 2\}
\end{equation}
for faster reward propagation \citep{watkins1989learning}. We ablate the n-step TD target and find it crucial for visual control tasks, especially in the most complex dog domain.
\paragraph{Timestep Schedule}
Timestep schedule in flow and MeanFlow model learning has a large impact on the learned distribution as shown in prior work \citep{lee2024improving, esser2024scaling, geng2025mean}. For \algrf, we choose the commonly used uniform distribution $\mathcal{U}\sbr{0, 1}$. For the MeanFlow policy, we follow the default settings in \cite{geng2025mean} but change the $r\neq t$ probability from $25\%$ to $100\%$ to handle the online changing target distribution.
\paragraph{Training and Evaluation Details}
We train for 1M environment frames by default, while for the most challenging tasks we increase the training steps. In particular, we use 5M environment frames on Humanoid-v4 and 3M frames on Dog Stand, Dog Trot and Dog Walk. Across all environments, we perform one training iteration every 5 collected environment steps. We report the average performance over 20 episodes for evaluation. For both \algrf and \algmf, we evaluate with single step sampling and no best-of-N sampling.
\subsection{Baselines}\label{subsec:baseline}
\paragraph{Gym MuJoCo Environments}
We compare our method to two families of model-free online RL algorithms. The first family is classic RL algorithms with Gaussian policy parametrization, including PPO \citep{schulman2017proximal}, TD3 \citep{fujimoto2018addressing} and SAC \citep{haarnoja2018soft}. Those methods use 1-NFE (Number of Function Evaluations) sampling for action. The second family includes online diffusion policy algorithms DIPO \citep{yang2023policy}, DACER \citep{wang2024diffusion}, QSM \citep{psenka2023learning}, QVPO \citep{ding2024diffusion} and DPMD \citep{ma2025efficient}. These methods require multiple sampling steps to generate high-quality actions in both training and inference. All baseline results reported for MuJoCo environments are taken from the DPMD paper \citep{ma2025efficient}.
\paragraph{DMControl Environments}
For visual environments, we compare \alg against three policy learning algorithms: SAC \citep{haarnoja2018soft}, DDPG \citep{lillicrap2015continuous} and DPMD \citep{ma2025efficient}. The implementations of SAC and DDPG are based on DrQ \citep{kostrikov2020image} and DrQ-v2 \citep{yaratsmastering} respectively. As there are currently no online diffusion policy methods that achieve competitive results on visual DMControl tasks, we implement a DPMD variant that aligns with \alg in all components except the policy learning.
% \subsection{Baselines}
% \paragraph{Gym MuJoCo Environments}
% For state-based environment experiments, we include 8 model-free baselines spanning both Gaussian policy and diffusion policy. 
% \paragraph{DMControl Environments}
\subsection{Hyperparameters}
\begin{table}[h]
\centering
\caption{Hyperparameters used for state-based Gym MuJoCo environments.}
\begin{tabular}{lr}
\toprule
\textbf{Hyperparameter} & \textbf{Value} \\
\midrule
Critic learning rate & 3e-4\\
Policy learning rate & 3e-4, linear annealing to 3e-5
\\
Value network hidden layers & 3\\
Value network hidden neurons & 256\\
Value network activation & Mish\\
Policy network hidden layers & 3\\
Policy network hidden neurons & 256\\
Policy network activation & Mish\\
Batch size & 256\\
Replay buffer size & 1M\\
Action repeat & 1\\
Frame stack & 1\\
n-step returns & 1\\
\bottomrule
\end{tabular}
\end{table}

\begin{table}[h]
\centering
\caption{Hyperparameters used for visual observation DMControl environments.}
\begin{tabular}{lr}
\toprule
\textbf{Hyperparameter} & \textbf{Value} \\
\midrule
Critic learning rate & 3e-4\\
Policy learning rate & 3e-4, linear annealing to 3e-5\\
Value network hidden layers & 3\\
Value network hidden neurons & 256\\
Value network activation & Mish\\
Policy network hidden layers & 3\\
Policy network hidden neurons & 256\\
Policy network activation & Mish\\
Encoder network convolutional layers & 4 \\
Encoder network kernel size & $3\times3$ \\
Encoder network activation & ReLU \\
Batch size & 256\\
Replay buffer size & 1M\\
Action repeat & 2\\
Frame stack & 3\\
n-step returns & 3\\
\bottomrule
\end{tabular}
\end{table}

\begin{table}[h]
\centering
\caption{\algrf Hyperparameters.}
\begin{tabular}{lr}
\toprule
\textbf{Hyperparameter} & \textbf{Value} \\
\midrule
Sampling stepsize (training) & 0.05\\
Sampling stepsize (evaluation) & 1.0\\
t sampler & uniform(0, 1)\\

\bottomrule
\end{tabular}
\end{table}

\begin{table}[h!]
\centering
\caption{\algmf Hyperparameters.}
\begin{tabular}{lr}
\toprule
\textbf{Hyperparameter} & \textbf{Value} \\
\midrule
Sampling stepsize (training) & 1.0\\
Sampling stepsize (evaluation) & 1.0\\
t, r sampler & uniform(0, 1)\\
\bottomrule
\end{tabular}
\end{table}

\section{The Use of LLM}
We used LLM for minor text polishing. The model did not contribute to research ideation, methodology, or results.

\end{document}